%% file: physweep_filled.tex
\pdfoutput=1 
\documentclass[11pt]{article}

\usepackage[font=libertinus, citestyle=numeric]{kurbanlab}

\input{affiliations}   

\usepackage{tikz}
\usetikzlibrary{arrows.meta,positioning,calc,backgrounds,fit,patterns}
\usepackage{pifont}
\usepackage{colortbl}   
\usepackage{multirow}

\definecolor{ForestGreen}{RGB}{34,139,34}
\definecolor{RoyalBlue}{RGB}{65,105,225}
\definecolor{BurntOrange}{RGB}{204,85,0}
\definecolor{Plum}{RGB}{142,69,133}

\definecolor{trkhi}{RGB}{31,102,160}
\newcommand{\tzero}{\cellcolor{black!8}\textcolor{black!55}{$0$}}
\newcommand{\tk}[2]{\cellcolor{trkhi!#1}$#2$}
\newcommand{\tkb}[2]{\cellcolor{trkhi!#1}$\mathbf{#2}$}

\newcommand{\PRE}{\ensuremath{\mathrm{PRE}}}
\newcommand{\PRI}{\ensuremath{\mathrm{PRI}}}
\newcommand{\CBI}{\ensuremath{\mathrm{CBI}}}
\newcommand{\EG}{\ensuremath{\mathrm{EG}}}
\newcommand{\thetah}{\ensuremath{\hat{\theta}}}

\title{\textsc{PhysWeep}: Does a Video Generator Realize the Physics You Ask For?}
\RunningTitle{PhysWeep: parameter faithfulness in video generators}

\Author{Rasul Khanbayov}{hbku}
\Author[corresponding=hkurban@hbku.edu.qa, orcid=0000-0003-3142-2866]{Hasan Kurban}{hbku}

\CodeURL{https://github.com/KurbanIntelligenceLab/physweep}

\begin{document}
\maketitle

\begin{abstract}
Image-to-video generators are often credited with absorbing physical
dynamics as implicit world models, a claim the community currently checks
with plausibility scores that ask whether a clip looks consistent with
real-world motion. Plausibility is the wrong test on its own, because a
clip can look natural while encoding the wrong value of the governing
physical parameter, and no existing benchmark measures this gap directly.
\textsc{PhysWeep} closes it with a fixed, label-free audit, treating a
frozen generator as a black box, recovering the realized parameter from
generated pixels, and reporting how often generation is trackable at all,
how far the realized value sits from the requested one, and which, if
either, of the literature's two proposed failure mechanisms the data
support. A deterministic-simulator positive control confirms every score
is exactly checkable.
Applied to three open generators across six sweep axes, \textsc{PhysWeep}
finds a specific, reproducible, previously undocumented failure.
Conditional on producing trackable motion, two of the three generate
confident, well-fit dynamics that converge to one of a small number of
fixed, wrong values selected by the \emph{sampling seed} rather than by
the request, reproducing across two independent model families, two
physical systems, and an independent tracker. It matches neither the
prior reversion nor the case-based clamping the literature anticipates,
because the reversion target is seed-conditional rather than a single
global default, and a leave-one-out selection rule rejects both; the
in-range faithfulness slope is statistically indistinguishable from zero
wherever a response is estimable at all. A benchmark averaging over seeds
would never see this: each sample is confidently locked to a wrong
constant, exactly the failure a plausibility score is structurally blind
to. We release the protocol, suite, and analysis code as a reusable,
model-agnostic audit for generators that claim physics controllability.
\end{abstract}

\section{Introduction}
\label{sec:intro}

Progress in video generation has revived an old hope: that a model trained
only to predict pixels will, as a by-product, learn the laws that govern how
the world moves \citep{brooks2024sora,agarwal2025cosmos}. The community has
accordingly invested in measuring the \emph{physical plausibility} of
generated video, through benchmarks that ask human raters or VLM auto-raters
whether motion looks right
\citep{bansal2024videophy,bansal2025videophy2,meng2024phygenbench,motamed2025physicsiq,zhang2025morpheus,phyworldbench2025}.

Plausibility is the wrong target on its own. A generator can produce a clip
that is visually flawless yet encodes an incorrect value of the relevant
parameter: a ball can fall along a smooth arc whose implied gravitational
acceleration is simply wrong, and no plausibility rater will object, because
the arc is locally consistent. The meaningful question is whether a generator
realizes the \emph{specific} physics implied by its conditioning, and whether
that realization extends past the values the model saw most often in
training. The second clause matters because a model that has internalized a
law should treat the controlling parameter as a free variable, whereas one
that has memorized typical motions falls back on what it has seen once pushed
away from the common case.

Recent roadmaps name this gap directly: surveys identify \emph{intrinsic
faithfulness} and \emph{controllability} as the frontier and note that
metrics for them remain scarce
\citep{roadmap2025visualworld,evolution2026videogen}, judge-based scores
are hard to audit because a judge's weights and APIs drift while a
parameter recovered by a fixed estimator does not
(\cref{sec:related}), and the gap is not incidental, since
generators optimized for media prioritize visual smoothness over physical
fidelity \citep{embodiedworld2026}. What is missing is a fixed, auditable
measurement of whether a \emph{conditioned} parameter is realized in the
pixels.

We turn ``physical understanding'' of a pretrained generator into a
falsifiable measurement. Treat the frozen model as a black-box map from a
conditioned parameter $\theta$ to a trajectory, recover the realized
$\thetah$ from generated pixels with no human labels, and quantify
faithfulness, out-of-range behavior, and the \emph{form} of any failure.
\textsc{PhysWeep} also adjudicates between the two mechanisms the literature
proposes: collapse to a single global default, or case-based mimicry of the
nearest seen value. Our delta against the closest prior art
(\cref{sec:related}) is a conjunction: a \emph{swept} control
parameter rather than a fixed default, on \emph{frozen black-box} models,
over \emph{six axes} rather than gravity alone, with a continuous
\emph{parameter-recovery} readout rather than a binary or judge score. We
do not claim to be first to recover physics from generated pixels. What is
new is the \emph{determinant}: the sampling seed is known to fix a
generator's appearance \citep{xu2025goodseed}, and we show it also fixes
the physics, overriding an explicit request for it.

\begin{kilkey}
The sampling seed, not the request, picks the physics. A benchmark that
averages over seeds reports a smooth mean and hides that each individual
sample is confidently locked to a wrong constant.
\end{kilkey}

\paragraph{Contributions}
\begin{enumerate}
  \item \textbf{A reusable audit artifact.} \textsc{PhysWeep} ships
  deterministic parameterized systems with ground-truth $\theta$, a fixed
  label-free estimator, and analysis code (\cref{sec:suite}),
  validated by a simulator-video positive control. The estimator is fixed
  rather than learned, so the measurement is auditable by construction
  (unlike evolving VLM-judge scores \citep{phyground2026}), and it applies
  to any generator accepting one of our two conditioning channels,
  including ones claiming physics controllability.
  \item \textbf{Label-free metrics and a mechanism adjudication.} A
  trackability rate, a Parameter Recovery Error (\PRE) with its
  Extrapolation Gap (\EG) and a monotonicity score, two competing
  failure-form indices, Prior-Reversion (\PRI) and Case-Based (\CBI), and a
  leave-one-out rule that lets the data choose between global-default
  collapse and nearest-value mimicry, or reject both
  (\cref{sec:metrics}) --- together turning ``did the generator
  learn physics'' into a falsifiable, pre-registered test rather than a
  qualitative impression.
  \item \textbf{A formal account of the failure we find, and the statistic
  that identifies it.} We define seed-conditional defaults and prove they
  force $\PRI\to1$ with a vanishing held-out $R^2$ (so rejecting both prior
  hypotheses is a derived prediction, not an unexplained negative), and that
  the same signature arises from noise, so \PRI\ alone cannot establish the
  mechanism, and give a Seed Determination Index that can
  (\cref{sec:seedtheory}). This closes a real gap in how the field
  reports results: an experiment that is not seed-resolved cannot tell
  seed-locking apart from a well-behaved generator with noisy output, and
  we show that distinction is exactly what is at stake here. We report
  what the index isolates on three frozen generators
  (\cref{sec:results}), extending the known
  imprint of the sampling seed on generated \emph{appearance}
  \citep{xu2025goodseed} to generated \emph{dynamics}.
\end{enumerate}

\begin{figure}[t]
\centering
\resizebox{\linewidth}{!}{%
\begin{tikzpicture}[font=\small,>={Stealth[length=2.4mm]},
  ax/.style={->,thick,black!75},
  gridln/.style={black!12,line width=0.4pt},
  ideal/.style={dashed,black!45,line width=0.9pt},
  ctrl/.style={ForestGreen!70!black,line width=1.6pt},
  seedA/.style={RoyalBlue!85!black,line width=1.7pt},
  seedB/.style={BurntOrange!90!black,line width=1.7pt},
  meas/.style={Plum!75!black,line width=1.7pt},
  note/.style={draw=black!30,fill=white,rounded corners=2pt,inner sep=3pt,
               font=\small,align=left},
  band/.style={opacity=0.30}]

\begin{scope}
  \node[anchor=west,font=\large\bfseries] at (-1.0,8.05) {(a) CogVideoX-5B-I2V, projectile};
  \node[anchor=west,font=\small,black!65] at (-1.0,7.45)
    {two seeds, two fixed gravities, neither the one requested};
  \foreach \y in {1.4,2.8,4.2}{\draw[gridln] (0,\y) -- (6.7,\y);}
  \draw[ax] (0,0) -- (8.6,0);
  \draw[ax] (0,0) -- (0,6.6);
  \foreach \g/\x in {5/1.1,7/2.2,9.8/3.7,12/4.9,15/6.5}
    {\draw[black!75] (\x,0) -- (\x,-0.13); \node[font=\small,anchor=north] at (\x,-0.16) {\g};}
  \foreach \v/\y in {0/0,5/1.4,10/2.8,15/4.2}
    {\draw[black!75] (0,\y) -- (-0.13,\y); \node[font=\small,anchor=east] at (-0.17,\y) {\v};}
  \node[font=\small,anchor=north] at (4.0,-0.66) {requested $g$ \ (m/s$^2$)};
  \node[font=\small,rotate=90,anchor=south] at (-1.05,2.4) {recovered $|\hat g|$};
  \draw[ideal] (1.1,1.4) -- (6.9,4.42);
  \draw[ctrl]  (1.1,1.4) -- (6.5,4.2);
  \fill[RoyalBlue!85!black,band] (1.1,3.1836) rectangle (6.5,3.2004);
  \draw[seedA] (1.1,3.192) -- (6.5,3.192);
  \foreach \x in {1.1,2.2,3.7,4.9,6.5}
    {\draw[seedA,fill=white] (\x,3.192) circle (2.3pt); \fill[RoyalBlue!85!black] (\x,3.192) circle (1.0pt);}
  \fill[BurntOrange!90!black,band] (1.1,1.932) rectangle (6.5,2.044);
  \draw[seedB] (1.1,1.988) -- (6.5,1.988);
  \foreach \x in {1.1,2.2,3.7,4.9,6.5}
    {\fill[BurntOrange!90!black] (\x-0.07,1.918) rectangle (\x+0.07,2.058);}
  \node[ForestGreen!45!black,font=\small,anchor=west,align=left] at (6.95,4.42)
    {\textbf{faithful} $=$ positive control\\ (simulator video) $\beta{=}1.0001$};
  \node[RoyalBlue!70!black,font=\small,anchor=west] at (6.75,3.192) {seed A: $11.4\pm0.03$};
  \node[BurntOrange!60!black,font=\small,anchor=west] at (6.75,1.988) {seed B: $7.1\pm0.2$};
  \node[note,anchor=north west] at (0.35,6.45)
    {$80\%$ of clips trackable; shaded bands are the\\
     within-seed spread, narrower than the markers};
\end{scope}

\begin{scope}[shift={(13.6,0)}]
  \node[anchor=west,font=\large\bfseries] at (-1.0,8.05) {(b) DynamiCrafter\textsubscript{512}, bouncing ball};
  \node[anchor=west,font=\small,black!65] at (-1.0,7.45)
    {same signature, different architecture and different law};
  \foreach \y in {1.2,2.4,3.6,4.8}{\draw[gridln] (1.05,\y) -- (5.95,\y);}
  \draw[ax] (0,0) -- (8.6,0);
  \draw[ax] (0,0) -- (0,6.6);
  \foreach \e/\x in {0.6/1.2,0.7/2.7,0.8/4.2,0.9/5.7}
    {\draw[black!75] (\x,0) -- (\x,-0.13); \node[font=\small,anchor=north] at (\x,-0.16) {\e};}
  \foreach \v/\y in {0.6/1.2,0.7/2.4,0.8/3.6,0.9/4.8,1.0/6.0}
    {\draw[black!75] (0,\y) -- (-0.13,\y); \node[font=\small,anchor=east] at (-0.17,\y) {\v};}
  \node[font=\small,anchor=north] at (4.0,-0.66) {requested restitution $e$};
  \node[font=\small,rotate=90,anchor=south] at (-1.15,2.4) {recovered $\hat e$};
  \draw[ideal] (1.2,1.2) -- (6.4,5.04);
  \node[black!45,font=\small,anchor=west] at (6.5,5.04) {\textbf{faithful}};
  \fill[Plum!75!black,band] (1.2,5.64) rectangle (5.7,6.00);
  \draw[meas] (1.2,5.94) -- (5.7,5.94);
  \foreach \x in {1.2,2.7,4.2,5.7}
    {\draw[meas,fill=white] (\x,5.94) circle (2.3pt); \fill[Plum!75!black] (\x,5.94) circle (1.0pt);}
  \node[Plum!55!black,font=\small,anchor=west,align=left] at (5.9,6.05)
    {every gated seed\\ in $[0.97,1.00]$};
  \draw[<->,black!55,line width=0.7pt] (0.85,1.2) -- (0.85,5.94);
  \node[black!65,font=\small,anchor=south,rotate=90] at (0.73,3.0) {gap at $e{=}0.6$};
  \node[note,anchor=south east,align=right] at (8.45,0.35)
    {$25.0\%$ trackable, clip fit $R^2{=}1.000$\\
     ``always almost perfectly elastic'': the request\\
     moves $0.6\!\to\!0.9$, the realized value does not};
\end{scope}

\begin{scope}[shift={(27.2,0)}]
  \node[anchor=west,font=\large\bfseries] at (-0.8,8.05) {(c) Which response shape?};
  \node[anchor=west,font=\small,black!65] at (-0.8,7.45)
    {\textsc{schematic}: the two shapes Eq.~(\ref{eq:pri}) allows, and ours};
  \fill[black!6] (3.4,0) rectangle (6.5,6.6);
  \draw[ax] (0,0) -- (6.9,0);
  \draw[ax] (0,0) -- (0,6.85);
  \node[black!45,font=\small,anchor=south] at (1.75,6.62) {in-range};
  \node[black!45,font=\small,anchor=south] at (4.95,6.62) {out-of-range};
  \node[font=\small,anchor=north] at (3.3,-0.98) {requested $\theta$};
  \node[font=\small,rotate=90,anchor=south] at (-0.72,2.6) {recovered $\hat\theta$};
  \draw[black!35,dotted,line width=0.7pt] (1.0,0) -- (1.0,1.179);
  \draw[black!35,dotted,line width=0.7pt] (3.4,0) -- (3.4,3.317);
  \draw[black!70] (1.0,0) -- (1.0,-0.13);
  \draw[black!70] (3.4,0) -- (3.4,-0.13);
  \node[font=\small,anchor=north] at (1.0,-0.17) {$\theta_0$};
  \node[font=\small,anchor=north] at (3.4,-0.17) {$\theta_{\text{edge}}$};
  \draw[RoyalBlue!85!black,line width=1.5pt] (0.35,0.6) -- (3.4,3.317) -- (6.3,3.317);
  \draw[ideal] (0.35,0.6) -- (6.3,5.90);
  \draw[ForestGreen!60!black,line width=1.5pt] (0.35,1.049) -- (6.3,2.239);
  \fill[ForestGreen!60!black] (1.0,1.179) circle (2.0pt);
  \draw[ForestGreen!60!black,fill=white] (1.0,1.179) circle (1.0pt);
  \draw[Plum!75!black,line width=1.9pt] (0.35,5.20) -- (6.3,5.20);
  \draw[BurntOrange!90!black,line width=1.9pt] (0.35,0.85) -- (6.3,0.85);
  \node[black!45,font=\small,anchor=west]            at (6.45,5.90) {faithful};
  \node[Plum!55!black,font=\small,anchor=west,align=left]        at (6.45,5.20) {\textbf{ours}: seed-\\conditional};
  \node[RoyalBlue!70!black,font=\small,anchor=west]   at (6.45,3.317) {case clamping};
  \node[ForestGreen!45!black,font=\small,anchor=west] at (6.45,2.239) {prior reversion};
  \node[BurntOrange!60!black,font=\small,anchor=west,align=left]  at (6.45,0.85) {\textbf{ours}, a\\different seed};
\end{scope}
\end{tikzpicture}}
\caption{\textbf{The seed picks the physics, and it is not tracking noise.}
\textbf{(a)} CogVideoX-5B-I2V, projectile, in-range grid, text conditioning:
recovered $|\hat g|$ is flat in the request and separated by \emph{sampling
seed}, with within-seed spread narrower than the plotted markers, while the
same tracker and fitter recover the swept parameter from simulator video at
slope $1.0001$. Each point is a mean over that seed's gated clips at that
$g$ ($80\%$ trackable in-range), not a single clip. \textbf{(b)} The same
signature in a different architecture on a different law: DynamiCrafter
returns near-perfect elasticity whatever restitution is asked for, at a fit
quality that rules out a tracking artifact ($25.0\%$ trackable in-range;
one fitter-degenerate outlier, $\hat e>1$, physically impossible for a
restitution coefficient, is excluded from the plotted line). \textbf{(c)}
Schematic, drawn to obey \cref{eq:pri}: case clamping tracks the
identity until $\theta_{\text{edge}}$ then goes flat, prior reversion is a
straight line of slope $\alpha$ crossing the identity at $\theta_0$, both
defined in \cref{sec:metrics} as functions of $\theta$ alone, so
neither can express a response flat in $\theta$ and offset by seed, which
is why the selection rule rejects both (\cref{prop:misspec}).
Sign convention in (a) is downward-negative; magnitudes are plotted.}
\label{fig:teaser}
\end{figure}

\section{Related work}
\label{sec:related}

A large family of benchmarks scores whether generated video looks physical:
human and learned raters of physical commonsense
\citep{bansal2024videophy,bansal2025videophy2}, broader law taxonomies
\citep{meng2024phygenbench,phyworldbench2025}, real footage
\citep{motamed2025physicsiq}, conservation-law metrics
\citep{zhang2025morpheus}, first-principles organization
\citep{t2vphysbench2025}, likelihood on valid-versus-invalid pairs
\citep{likephys2025}, and Physics and Controllability scored by VLM proxies
\citep{vbench2_2025,applepi2026}. These ask whether a clip looks right, not
whether it realizes a \emph{requested} parameter. \citet{phyground2026} argue
such judges are hard to reproduce because they evolve, and an evaluation on
real systems reports a fine-tuned judge scoring clean recordings only
moderately \citep{newtonianreal2026}; our plausibility contrast is an
independent instance on synthetic stimuli. Closer to our own readout,
\citet{li2025pisa} measure falling dynamics in generated video to drive
physics post-training, and \citet{thozhiyoor2026gravity} recover an effective
gravity from pixels, show that current generators under-accelerate toward a
fixed sub-Earth value, and eliminate metric-scale and frame-rate confounds
with a unit-free two-object timing test; \citet{gravityrewards2025} use a
similar readout as a verifiable post-training reward. This is the closest
prior art and we do not claim priority over it, but all three fix the target
law to gravity and characterize a model's default behavior, whereas our
question is orthogonal and complementary: given a parameter the user
\emph{asks for}, across six sweep axes, does the realized value move with the
request. Their fixed-$\theta$ result and our swept-$\theta$ result answer
different halves of the same worry.

A separate line establishes that the initial noise fixed by the sampling
seed leaves a strong, recoverable imprint on what a diffusion model
produces \citep{xu2025goodseed}, but for generated \emph{appearance};
\cref{def:scd} extends this to \emph{dynamics}, and to our
knowledge the seed has not previously been shown to fix a quantitative
physical parameter. A further wave of 2025--2026 methods claims physics
controllability through editable simulation, latent dynamics, geometry
guidance, force-vector or explicit-parameter conditioning, and
inference-time extrapolation
\citep{physchoreo2025,phantom2026,physvideo2026,phyco2026,motionforcing2026,phyparam2026,gillman2025forceprompting,lapg2026};
\textsc{PhysWeep} asks what a \emph{frozen} model does beforehand and
supplies the fixed measurement against which such claims can be
re-checked without training access. \Cref{app:related} extends this section
with the mechanism literature \citep{kang2025howfar,diffmemorize2025},
adjacent property-readout and VLM-based work
\citep{inferring2025dynamic,invisiblehand2026,vlipp2025,travl2025,vlmcannotreason2026},
and classic prediction/planning benchmarks
\citep{bear2021physion,bakhtin2019phyre,unterthiner2018fvd,huang2024vbench}.

\section{Problem: plausibility versus parameter faithfulness}
\label{sec:problem}

Let a system be governed by a known law $f$ with scalar control parameter
$\theta\in\Theta\subset\R$ (the framework extends to vector $\theta$;
we sweep scalars for clean identifiability). Given $s_0$ the true trajectory
is $\tau_\theta=f(s_0,\theta)$. A generator $G$ receives conditioning
$c(s_0,\theta)$ rendered from $\tau_\theta$ and produces $v=G(c(s_0,\theta))$;
we assume no access to its weights, data, or activations. An external
estimator $R$ maps generated pixels to $\thetah=R(v)$ by tracking the object
and fitting $f$.

\begin{definition}[Parameter faithfulness]
\label{def:faithful}
$G$ is \emph{$\epsilon$-faithful} on a sweep $S\subseteq\Theta$ if
$\E_{s_0}\,|R(G(c(s_0,\theta)))-\theta| \le \epsilon$ for all
$\theta\in S$.
\end{definition}

Plausibility, by contrast, is a property of $v$ alone: a rater $P(v)$
scores whether $v$ looks physical on some fixed scale (we use $1$--$5$),
independent of $\theta$. The two come apart precisely when $v$ is locally
smooth but encodes the wrong $\theta$.

\paragraph{Two hypotheses for out-of-range failure}
The literature offers two accounts of what a generator does when $\theta$
leaves the common range $S_\text{in}$, predicting different $\thetah$ that a
sweep can separate. Under \textbf{prior reversion},
$\thetah\approx\alpha\theta+(1-\alpha)\theta_0$ with a single fixed default
$\theta_0$ such as Earth gravity, independent of where in $S_\text{out}$ the
request lies. Under \textbf{case clamping} \citep{kang2025howfar}, $\thetah$
sticks at the nearest in-range value seen, $\theta_{\text{edge}}$. The two
coincide only when $\theta_0=\theta_{\text{edge}}$. Both assume $\thetah$ is
a function of $\theta$ alone; \cref{sec:seedtheory} shows our data
violate that assumption and derives the consequence.

\section{Metrics}
\label{sec:metrics}

Fix a system with sweep grid $\{\theta_i\}_{i=1}^{n}$ partitioned into an
in-range set $S_\text{in}$ (values common in natural video) and an
out-of-range set $S_\text{out}$ (values rare or absent). For each $\theta_i$
we draw $m$ seeded initial conditions and generations and obtain recovered
values $\{\thetah_{ij}\}_{j=1}^{m}$.

\paragraph{Parameter Recovery Error and Extrapolation Gap}
The normalized recovery error on a set $S$, with small $\delta>0$ for
stability, and the gap between splits, are
\begin{equation}
\label{eq:pre}
\PRE(S) = \frac{1}{|S|}\!\sum_{\theta_i\in S}\frac{1}{m}\sum_{j=1}^{m}
   \frac{|\thetah_{ij}-\theta_i|}{|\theta_i|+\delta},
\end{equation}
\begin{equation}
\label{eq:eg}
\EG = \PRE(S_\text{out})-\PRE(S_\text{in}).
\end{equation}
Lower \PRE\ is better and $\PRE=0$ is perfect faithfulness; $\EG\approx 0$
means the model extrapolates as well as it interpolates.

\paragraph{Trackability rate}
Not every generation contains motion a law can be fitted to. We report the
fraction of clips clearing the fit gate ($R^2\!\ge\!0.8$) as a metric in its
own right, not a silent exclusion: it is label-free and
\cref{sec:results} shows it carries information an automated
plausibility rater does not.

\paragraph{Faithfulness slope and monotonicity}
$\beta$ is the least-squares slope of $\thetah$ on $\theta$ over
$S_\text{in}$: a faithful model has $\beta\approx1$, one ignoring the
conditioning $\beta\approx0$. Because the response may be nonlinear we also
report Spearman's $\rho$, capturing whether the ordering of requested values
survives when the scale is off.

\paragraph{Two failure-form indices and a selection rule}
To adjudicate between the hypotheses of \cref{sec:problem} we fit both
to the out-of-range data:
\begin{align}
\text{prior reversion:}\quad & \thetah = \alpha\theta + (1-\alpha)\theta_0
   + \varepsilon, \label{eq:pri}\\
\text{case clamping:}\quad & \thetah = \min(\theta,\theta_{\text{edge}})
   + \varepsilon, \nonumber
\end{align}
\begin{equation*}
\PRI = 1-\hat\alpha,
\qquad
\CBI = 1 - \mathrm{RSS}_{\text{clamp}}/\mathrm{RSS}_{\text{null}}.
\end{equation*}
$\PRI=0$ means the request is followed, $\PRI=1$ means the output ignores it
and returns $\theta_0$, and \CBI\ is the out-of-range variance explained by
clamping at the edge. We select by leave-one-$\theta$-out $R^2$ and
$\Delta$AIC, reporting \emph{neither} when both fail the held-out test.
$\theta_0$ and $\theta_{\text{edge}}$ are reported, not assumed.

\paragraph{Stochasticity, scale, and uncertainty}
Generators are sampled with $m\ge5$ seeds per condition. We separate the bias
$\E[\thetah]-\theta$ from the within-condition dispersion
$\mathrm{std}(\thetah)$, so an unbiased-but-noisy model is not confused with
a biased-but-consistent one. Because we render the conditioning, spatial and
temporal scales are fixed within a clip and all metrics use the ratio
$\thetah/\theta$, invariant to a common rescaling; a scale-invariance
ablation checks this. Metric-scale ambiguity is a known confound for physics
readouts from generated video \citep{thozhiyoor2026gravity} and this is our
defense against it. All metrics carry $95\%$ bootstrap CIs over
$(\theta_i,\text{seed})$ pairs.

\paragraph{Well-definedness of \texorpdfstring{$\thetah$}{theta-hat}}
The metrics mean something only if $R$ recovers a well-defined quantity;
\cref{app:proof} gives the two regularity conditions (trackability,
identifiability) under which $\thetah$ converges in probability to the
realized parameter, a claim about the \emph{estimator}, not the
generator, and discloses one small bias this exposes for the damped
pendulum. When the motion is not well described by $f$ the residual is
large, so we gate at $R^2\!\ge\!0.8$ and treat low-fit generations as
\emph{non-physical} rather than forcing a $\thetah$.

\subsection{Seed-conditional defaults}
\label{sec:seedtheory}

Both hypotheses assume the realized value is a function of $\theta$ alone.
Our data are not, so we state the general model and derive what it implies
for the indices rather than treating the resulting ``neither'' as an
unexplained negative. This is our only novel formal claim.

\begin{definition}[Seed-conditional default]
\label{def:scd}
Let the sweep be run as a \emph{crossed} design, every grid value with every
seed, so that the seed is independent of $\theta$. $G$ has a
\emph{seed-conditional default} on $S$ if there is a non-constant
$c:\mathcal{Z}\to\Theta$ with $\thetah(\theta,z)=c(z)+\varepsilon$, where
$\varepsilon$ has mean $0$ and variance $\sigma_\varepsilon^2$ and is
independent of $(\theta,z)$, and $\sigma_c^2=\operatorname{Var}_z[c(z)]>0$.
Prior reversion with a single global $\theta_0$ is the special case
$\sigma_c^2=0$, $c\equiv\theta_0$, so the model nests it. The map $c$ is the
dynamical analogue of the appearance imprint the seed is known to carry
\citep{xu2025goodseed}.
\end{definition}

\begin{proposition}[The indices are misspecified, predictably]
\label{prop:misspec}
Under \cref{def:scd}: (i) the pooled least-squares prior-reversion
fit has $\hat\alpha\to0$ in probability, so $\PRI\to1$; (ii) its implied
default converges in probability to $\E_z[c(z)]$, a population mean
that no individual generation realizes; and (iii) every predictor that is a
function of $\theta$ alone has population $R^2=0$, so its
leave-one-$\theta$-out $R^2$ converges in probability to $0$ and, for any
fixed margin $\tau>0$, a selection rule accepting only
$R^2_{\mathrm{LOO}}>\tau$ returns \emph{neither} with probability tending to
one. Statements (i)--(iii) hold verbatim when $\sigma_c^2=0$ and
$\sigma_\varepsilon^2$ is large, so $\PRI$ and the selection outcome cannot
distinguish a seed-conditional default from unstructured noise.
\end{proposition}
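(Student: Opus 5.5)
The argument is a large-sample least-squares computation in the crossed design of \cref{def:scd}. Write the realized value as $\thetah_{ij}=\theta_i+u_{ij}$ with $u_{ij}=c(z_j)-\theta_i+\varepsilon_{ij}$. The cleaner route is to work directly with $\thetah_{ij}=c(z_j)+\varepsilon_{ij}$, which does not depend on $\theta_i$.

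\textbf{Steps (i) and (ii).} The pooled prior-reversion fit \cref{eq:pri} is an ordinary linear regression of $\thetah$ on $\theta$ with slope $\alpha$ and intercept $(1-\alpha)\theta_0$. Its slope estimate is therefore
\begin{equation*}
\hat\alpha=\widehat{\operatorname{Cov}}(\theta,\thetah)\big/\widehat{\operatorname{Var}}(\theta).
\end{equation*}
Because the design is crossed, $\theta$ is independent of $z$. By assumption, $\theta$ is also independent of $\varepsilon$. The law of large numbers (as $m\to\infty$ seeds per grid value, with the grid fixed and $\widehat{\operatorname{Var}}(\theta)>0$) then gives
\begin{equation*}
\widehat{\operatorname{Cov}}(\theta,c(z)+\varepsilon)\to\operatorname{Cov}(\theta,c(z))+\operatorname{Cov}(\theta,\varepsilon)=0.
\end{equation*}
Hence $\hat\alpha\to0$ in probability, and so $\PRI\to1$. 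The fitted intercept $\bar{\thetah}-\hat\alpha\bar\theta$ converges to $\E_z[c(z)]$. The implied default is $\hat\theta_0=\text{intercept}/(1-\hat\alpha)$, so it converges to the same limit by the continuous mapping theorem, since $1-\hat\alpha\to1$. To show that no individual generation realizes this limit, I will note that $c(z)$ is non-constant with $\sigma_c^2>0$. I will phrase the claim as: the mean is not almost surely equal to $c(z)$. If $c$ takes finitely many values, as in \cref{fig:teaser}, the mean can generically differ from all of them. I will state this carefully rather than overclaim.

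\textbf{Step (iii).} For any measurable $h(\theta)$, the error decomposes as
\begin{equation*}
\E[(\thetah-h(\theta))^2]=\operatorname{Var}(\thetah)+\E[(\E\thetah-h(\theta))^2].
\end{equation*}
This holds because $\E[\thetah\mid\theta]=\E\thetah$ is constant, using independence of $(z,\varepsilon)$ from $\theta$. The best predictor is therefore the constant, and the population $R^2$ relative to the mean predictor is $\le0$, with $\sup=0$. For the leave-one-$\theta$-out $R^2$, each held-out fold's prediction comes from a model fit on the other grid values. As $m\to\infty$ that fit converges to its population counterpart, and for both prior reversion and clamping the limiting fits evaluated at the held-out $\theta$ converge to $\E\thetah$ when the models are fit consistently. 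More generally, any function of $\theta$ cannot beat the grand mean in the limit. The held-out residual sum of squares over the total sum of squares therefore tends to a value $\ge1$, so $R^2_{\mathrm{LOO}}$ tends to a limit $\le0$. For fixed $\tau>0$, $P(R^2_{\mathrm{LOO}}>\tau)\to0$, and the rule returns \emph{neither}.

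\textbf{Main obstacle.} The clamping fit is nonlinear in $\theta_{\text{edge}}$ and is not a consistent regression of a constant. Its limit may be a non-constant function of $\theta$, in which case its held-out $R^2$ tends to something negative rather than $0$. I will handle this by stating the claim as $\limsup R^2_{\mathrm{LOO}}\le0$, which is all the selection conclusion needs. I will also verify that the paper's ``converges to $0$'' holds for the linear fit, where the limiting fitted function is exactly the constant $\E\thetah$.

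\textbf{Noise case.} The final sentence of the statement follows by inspection. Every step above used only that $\thetah$ is independent of $\theta$ with finite variance, never that $\sigma_c^2>0$. So (i)–(iii) hold verbatim for $c\equiv\theta_0$ with large $\sigma_\varepsilon^2$, where the limit in (ii) is just $\theta_0$. The indices are therefore functions of a joint law that is indistinguishable between the two cases at the level of the $\theta$-marginal regression.
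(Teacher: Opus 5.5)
Your proof is correct. For (i), (ii) and the variance decomposition in (iii) it is the paper's argument. The real divergence is the leave-one-$\theta$-out step, and there you are more careful than the paper.

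The paper asserts that the held-out numerator and the denominator both converge to $\operatorname{Var}(\thetah)$. That tacitly requires every held-out prediction to converge to $\E[\thetah]$. This holds for the linear prior-reversion fit, but not for $\min(\theta,\theta_{\text{edge}})$, which has no free intercept. With $\theta_{\text{edge}}$ fixed, the held-out mean squared error tends to $\operatorname{Var}(\thetah)+\frac1n\sum_i(\E[\thetah]-\min(\theta_i,\theta_{\text{edge}}))^2$. So $R^2_{\mathrm{LOO}}$ converges to a negative number unless the clamp passes through $\E[\thetah]$ at every grid point. With the released default $\theta_{\text{edge}}=\min S_\text{out}$, for instance, the clamp predicts the constant $\min S_\text{out}$ on every out-of-range fold.

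Your weakening to $\limsup R^2_{\mathrm{LOO}}\le0$ for clamping is therefore a correction, not a loss. The same goes for your ``population $R^2\le0$ with supremum $0$'' in place of the paper's ``every $h(\theta)$ has population $R^2=0$''. The one-sided bound is all the selection claim uses. Do delete your earlier sentence saying both limiting fits converge to $\E[\thetah]$, since your obstacle paragraph contradicts it.

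What remains soft is the justification. It leans on each fold-wise fit converging to a population counterpart, which you would still have to show for the piecewise least-squares $\theta_{\text{edge}}$. You can avoid consistency altogether:
\begin{itemize}
\item A $\theta$-only predictor is constant on each held-out fold. So the held-out sum of squares is at least the within-fold sum of squares, giving $R^2_{\mathrm{LOO}}\le\mathrm{SS}_{\text{between}}/\mathrm{SS}_{\text{total}}$.
\item In the balanced crossed design the seed terms cancel from the fold means. So $\mathrm{SS}_{\text{between}}$ has expectation $(n-1)\sigma_\varepsilon^2$, while $\mathrm{SS}_{\text{total}}$ grows like $nm(\sigma_c^2+\sigma_\varepsilon^2)$.
\end{itemize}
The bound therefore tends to $0$ in probability for all $\theta$-only predictors simultaneously. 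It also never needs the fold-wise fits to be independent of the held-out clips. In a crossed design they are not: both contain the same $c(z_j)$, so the summands are not the i.i.d.\ terms the paper's proof invokes.
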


\begin{proofsketch}
The crossed design gives $\thetah\perp\theta$, hence
$\operatorname{Cov}(\theta,\thetah)=0$, which yields (i) and, through the
intercept, (ii). For (iii) the best $\theta$-only predictor is the marginal
mean, so the population $R^2$ is $0$ and the held-out estimate converges to
it. The final sentence follows because none of the three arguments uses
$\sigma_c^2>0$. Full proof in \cref{app:seed}.
\end{proofsketch}

The margin in (iii) is not a detail: with $R^2_{\mathrm{LOO}}$ centered on
zero its \emph{sign} is noise, so a rule accepting any positive value would
admit unstructured data as prior reversion (our rule uses $\tau=0.1$;
justification and a smoke-test check in \cref{app:seed}). The proposition's
last
sentence is why a further statistic is needed: separating the regimes
requires one that conditions on the seed.

\paragraph{Seed Determination Index}
Decompose $\operatorname{Var}(\thetah)$ over the design's two factors:
$\mathrm{FDI}$ is the share from the requested $\theta$, $\mathrm{SDI}$ the
share from the seed, the latter a one-way random-effects intraclass
correlation so it is not upward-biased at this design size. Under
\cref{def:scd}, $\mathrm{FDI}\to0$ and
$\mathrm{SDI}\to\sigma_c^2/(\sigma_c^2+\sigma_\varepsilon^2)$ in probability;
when instead $\thetah=\theta+\varepsilon$ with $\varepsilon$ independent of
the seed, the two swap. On synthetic ground truth the released implementation
returns $\mathrm{SDI}=0.999$ for a seed-locked generator against $0.000$
faithful, $0.000$ pure noise and $0.060$ single global default, and refuses
rather than guessing below two seeds.

\section{Suite, pipeline, and protocol}
\label{sec:suite}

Each system is a deterministic renderer producing conditioning frames,
ground-truth $\theta$, and one interpretable sweep axis. Scenes are simple
and high-contrast so the tracker is robust and the identifiability conditions
above hold easily, keeping the measurement rather than the perception in
focus.
That has a cost \cref{sec:limitations} takes seriously: minimalist
synthetic scenes sit far from every tested model's training distribution.
Five systems supply six axes, the pendulum contributing two.

\begin{table}[t]
\centering
\caption{\textbf{The \textsc{PhysWeep} suite.} Five deterministic
renderers give six sweep axes, each with a closed-form map from a tracked
statistic to the parameter, so recovery is exact and label-free. Units: $g$
m/s$^2$, $\omega$ rad/s, $k$ N/m; $e,\zeta,\mu$ dimensionless. Rendering
settings and per-pair seed counts are in \cref{app:systems}.}
\label{tab:suite}
\small
\setlength{\tabcolsep}{4pt}
\resizebox{\linewidth}{!}{%
\begin{tabular}{@{}llll@{}}
\toprule
\kilth{System} & \kilth{$\theta$} & \kilth{Recovered statistic} & \kilth{Sweep grid: in-range \,/\, out-of-range} \\
\midrule
Projectile      & $g$      & parabola curvature       & $\{5,7,9.8,12,15\}$ \,/\, $\{1.6,3,20,25\}$ \\
Damped pendulum & $\omega$ & zero-crossing period     & $\{2,3,4,5\}$ \,/\, $\{0.5,1,8,12\}$ \\
                & $\zeta$  & envelope decay           & $\{.02,.05,.1\}$ \,/\, $\{.3,.6,.9\}$ \\
Bouncing ball   & $e$      & bounce-height ratio      & $\{.6,.7,.8,.9\}$ \,/\, $\{.2,.35,.97\}$ \\
Spring--mass    & $k$      & oscillation period       & $\{20,40,60,80\}$ \,/\, $\{5,10,200,400\}$ \\
Inclined slide  & $\mu$    & along-slope acceleration & $\{.1,.2,.3,.4\}$ \,/\, $\{.01,.7,1.0\}$ \\
\bottomrule
\end{tabular}}
\end{table}

\begin{figure}[t]
\centering
\kilgraphics{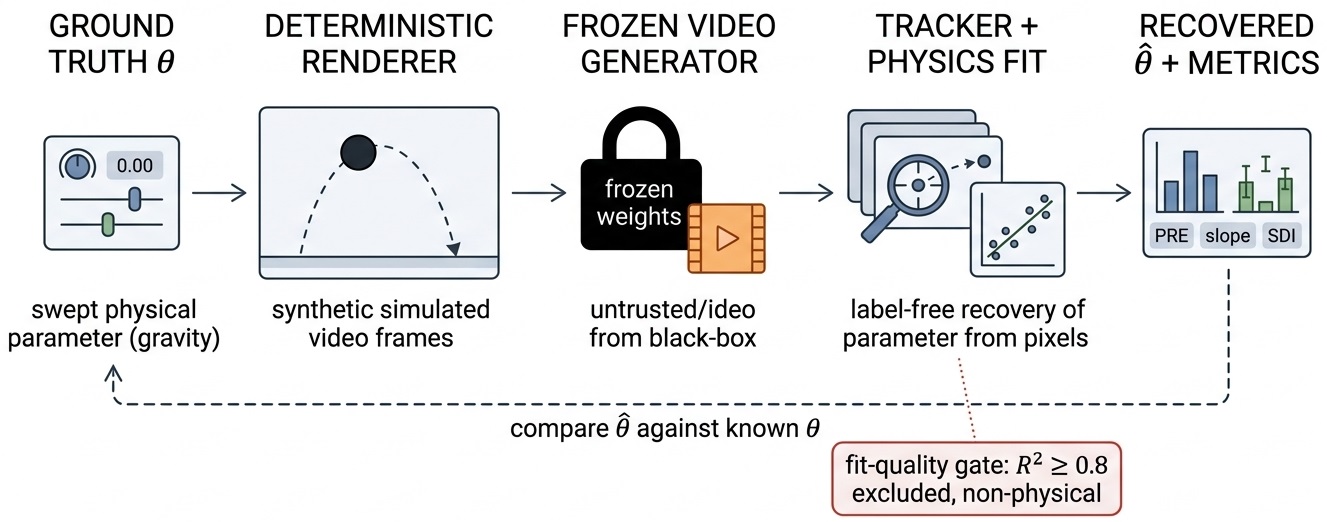}
\caption{\textbf{The measurement pipeline.} A known $\theta$ drives a
deterministic renderer, whose conditioning frames are handed to the
frozen generator under audit; the resulting clip is tracked and fit in
closed form to recover $\thetah$, which is compared back against the
known $\theta$ to compute every metric in \cref{sec:metrics}. The
fit-quality gate ($R^2\!\ge\!0.8$) is the only point where a generation
can be excluded, and it is excluded as \emph{non-physical}, not silently
dropped. Only the generator (dark box) is untrusted; every other stage is
exact and label-free by construction.}
\label{fig:pipeline}
\end{figure}

\paragraph{Pipeline}
\Cref{fig:pipeline} sketches the loop described here.
For each grid value $\theta_i$ and seed $z_j$ we render conditioning frames,
generate a continuation, track the observable and fit $f$ in closed form to
obtain $(\thetah_{ij},R^2_{ij})$; clips with $R^2_{ij}<0.8$ are marked
non-physical and excluded from \PRE, and that rate is reported alongside
every result. A VLM rater scores the same clips. Rendering uses a small
deterministic 2D engine with no external assets; tracking is color-blob
centroiding, with CoTracker3 \citep{karaev2023cotracker} as an independent
second tracker in ablation. Pseudocode is in \cref{app:pipeline}. The pipeline is
inference-only, run on a shared NVIDIA A100-SXM4 (40\,GB), not a single
consumer GPU (CogVideoX's $25.3$\,GB peak, with tiling, exceeds a 24\,GB
budget), for an estimated $\sim\!57$ total GPU-hours across all runs; the
estimate's derivation and per-model figures are in \cref{app:systems}.

\paragraph{How \texorpdfstring{$\theta$}{theta} is conditioned}
Models accept different inputs, so we treat the channel as a controlled axis.
\textbf{Frame-implied conditioning} supplies rendered frames whose motion
already implies $\theta$ and asks the model to continue, isolating dynamics
from semantics. \textbf{Text-specified conditioning} supplies a first frame
plus a prompt naming the regime, for example on the Moon versus on Jupiter,
testing whether a named control moves the realized parameter.
\Cref{sec:results} reports which channel each result uses, since only
one model admits both.

\paragraph{Protocol}
We tested three frozen open generators, each within reach of a single
high-end GPU (VRAM figures in \cref{app:systems}): LTX-Video
\citep{hacohen2024ltxvideo}, CogVideoX-5B-I2V \citep{yang2024cogvideox},
DynamiCrafter\textsubscript{512} \citep{xing2024dynamicrafter}. Stable Video
Diffusion \citep{blattmann2023svd} was dropped before any run because its
pipeline accepts no text prompt; a quantized Wan \citep{wan2025} was
considered and not run. Plausibility is contrasted against, not used as, the
main metric via one VLM auto-rater \citep{qwen25vl}; \emph{no human subset
and no \textsc{VBench-2.0} score were collected}. Before running we
pre-committed which of four outcomes would confirm or refute each
hypothesis, whatever the sweep actually showed: (1) faithful in-range,
fails out-of-range, prior reversion or case clamping selected --- the
field's own anticipated headline; (2) no conditioning effect
($\beta\approx0$); (3) faithful even out-of-range ($\EG\approx0$), a
surprising positive result reported as such; or (4) neither hypothesis
fits. Whichever occurred would drive the abstract, title, and
contributions; \cref{sec:results} names which two of the four the
sweep produced. We use $m\ge5$ seeds per $(\theta_i,\text{model})$, report
$95\%$ bootstrap CIs and effect
sizes rather than $p$-values alone, test
$\PRE(S_\text{out})>\PRE(S_\text{in})$ by
paired bootstrap, and report held-out $R^2$ and $\Delta$AIC for selection.
Model repositories and licenses are in \cref{app:systems}, which also
discloses that
no explicit commit/revision was pinned when loading any model, so each
resolved to that repository's default-branch head at run time;
\cref{app:checklist} is the reproducibility checklist.

\section{Results}
\label{sec:results}

Two protocol deviations are stated once and carried through.
\emph{First}, every result except the conditioning-channel ablation uses
\textbf{text-specified conditioning}: LTXConditionPipeline, the
\texttt{diffusers} component giving first-and-last-frame conditioning for
LTX-Video, never computes the scheduler's required shift parameter in the
release used here and, once patched, still produces incoherent output
regardless of resolution or scene content, confirmed against the working
single-image pipeline on an identical input; CogVideoX's official pipeline
accepts only one conditioning image. \emph{Second},
DynamiCrafter\textsubscript{512} replaced Stable Video Diffusion as above.

\begin{table}[t]
\centering
\caption{\textbf{Coverage on the three axes short clips can
identify at all}: fraction of in-range clips whose tracked trajectory
clears the $R^2\!\ge\!0.8$ fit gate, under text conditioning. Shading is
proportional to trackability, gray marks a measured zero (CogVideoX,
inclined slide), and \textbf{bold} marks the five cells yielding enough
gated clips for a slope. The remaining three axes are frame-budget-bound
rather than a conditioning result (\cref{sec:limitations}) and are
omitted here.}
\label{tab:coverage}
\small
\setlength{\tabcolsep}{5pt}
\renewcommand{\arraystretch}{1.22}
\begin{tabular}{@{}l ccc@{}}
\toprule
\kilth{Trackability (\%)} & \kilth{proj.} & \kilth{slide} & \kilth{ball} \\
\midrule
LTX-Video      & \tkb{20}{20}   & \tkb{35}{35}   & \tk{5}{5}    \\
CogVideoX      & \tkb{80}{80}   & \tzero         & \tk{5}{5}    \\
DynamiCrafter  & \tkb{64}{64}   & \tkb{80}{80}   & \tk{25}{25}  \\
\bottomrule
\end{tabular}
\end{table}

\paragraph{The instrument works: simulator-video positive control}
Ground-truth simulator continuations, not model output, through the identical
tracker and fitter, pooled across all six axes and both splits ($203$
measurements), give $\PRE(S_\text{in}) = 0.0031$ (CI $[0.0019, 0.0045]$),
slope $\beta = 1.0001$ (CI $[1.00008, 1.00023]$), and mean fit $R^2 = 0.986$.
The estimator recovers a swept parameter when one is present, so a null slope
on model output is a property of the model or the stimulus, not the pipeline.

\paragraph{Conditioning does not move the realized parameter}
\Cref{tab:coverage} shows the three axes short clips can identify at
all; trackability never exceeds $80\%$ there. The nine cells on the other
three, frame-budget-bound axes (not shown in the table) are a separate,
architectural $0\%$ (\cref{sec:limitations}), so ten of all
eighteen model-axis pairs measure $0\%$ trackability in total, for two
different reasons. Only five cells support a slope estimate, and every one is
indistinguishable
from zero while excluding $\beta=1$: LTX-Video projectile
$-0.015$ [$-0.090$, $0.039$] and inclined slide $-0.464$ [$-1.751$,
$1.097$]$^{\dagger}$; CogVideoX projectile $0.070$ [$-0.168$, $0.305$];
DynamiCrafter projectile $-0.025$ [$-0.466$, $0.480$] and inclined slide
$0.521$ [$-6.185$, $6.850$]$^{\dagger}$. The two marked $\dagger$ also
contain $\beta=1$, showing lack of power rather than absence, so we do not
count them as evidence for the null; the remaining three are inconsistent
with faithful conditioning. Recovery error is correspondingly large on
the projectile cells ($\PRE(S_\text{in})=0.853$, $2.016$, $0.781$ for
LTX-Video, CogVideoX, DynamiCrafter, against $0.0031$ for the positive
control), and rank order fares no better: Spearman's $\rho$ over the full
projectile sweep is $-0.268$ (LTX-Video) and $0.046$ (CogVideoX), so not
even the \emph{ordering} of requested values survives.

\paragraph{The failure is seed-locked, not untrackable noise}
The three models reach the same null slope through visibly different
behavior, confirmed by frame inspection. \textbf{LTX-Video} mostly fails to
generate any real fall: the conditioned disk stays near its initial position
regardless of the requested $g$ or how the prompt describes it, tested up to
``extremely strong gravity, like Jupiter, violent rapid acceleration
downward.'' \textbf{CogVideoX and DynamiCrafter} instead generate confident,
well-tracked motion that is simply the wrong amount, with recovered $\hat g$
clustering tightly by sampling seed and close to independent of the request:
on CogVideoX one seed gives $\hat g \approx -11.4 \pm 0.03$ across every
in-range $g\in\{5,7,9.8,12,15\}$ and another gives $-7.1\pm0.2$
(\cref{fig:teaser}), reproducing out-of-range on the same seeds.

That second mode also rules out the reading that the trackability rate
invites, that these models simply cannot render minimalist synthetic
scenes and the survivors are flukes: if the stimulus were merely
un-renderable, the recovered values clearing the gate would be scattered,
not tight. They are not (within-seed dispersion $0.03$--$0.13$,
individual-clip fit $R^2$ up to $1.000$, clusters reappearing under an
independent tracker). Confident, well-fit, seed-locked motion is what a
model ignoring $\theta$ produces, not what one failing to render produces,
so we state the headline conditionally: \textbf{conditional on producing
physically trackable motion, none of the three models detectably honors a
text-specified physical parameter.} Trackability bounds the
\emph{coverage} of that claim, not its validity.

\paragraph{The signature crosses architectures and physical systems}
The pattern is not specific to the projectile. LTX-Video's near-total-inertia
failure generalizes, with four of its five remaining axes below $6\%$
trackable. CogVideoX's seed-determined convergence generalizes only where
clips survive the gate, which fails entirely on spring-mass and inclined
slide; on bouncing-ball, every gated seed converges to a recovered
restitution in $[0.97,1.00]$ regardless of the request, and DynamiCrafter
independently reproduces the identical signature on the same system
($\hat e = 0.99$--$1.00$ across the full grid, individual-clip
$R^2=1.000$): two architectures landing on the same wrong-but-confident
behavior. Three of DynamiCrafter's axes ($\omega$, $\zeta$, spring-mass)
are separately bounded by its fixed $16$-frame output, too short for one
oscillation period, a capability ceiling we keep distinct from a
conditioning failure throughout.

\paragraph{A third variant: a split, not a single, default}
DynamiCrafter's inclined slide adds a variant the two-hypothesis framework
does not name. One seed behaves like the single-default cases above, at
$\hat\mu\approx1.80$ across the whole grid; a second shows \emph{two} tight
clusters, $\hat\mu\approx4.7$ ($\mathrm{std}=0.04$) in-range and $\approx7.6$
($\mathrm{std}=0.13$) out-of-range. That is neither ignoring $\theta$ nor a
single global default. We name it the \emph{split-default} pattern.

\paragraph{Neither existing hypothesis explains it, exactly as predicted}
We adjudicated every model-axis pair with at least four gated out-of-range
points, yielding \textbf{8 of 18} pairs (\cref{tab:e2}), all labeled
\emph{neither}: AIC prefers the prior-reversion shape over clamping
throughout and \PRI\ is high, yet held-out $R^2$ is negative for both
hypotheses on every pair. By \cref{prop:misspec} that is
exactly what a seed-conditional default predicts. $\mathrm{SDI}$
(\cref{tab:e2}) establishes that reading rather than inferring it,
decisively supporting seed-locking on five of the eight pairs but not the
remaining three, so we report the signature as established only where
$\mathrm{SDI}$ says so. The paired-bootstrap test is computable for six
pairs and exactly two intervals exclude zero, in \emph{opposite}
directions (\cref{app:systems}): CogVideoX/projectile shows the predicted
worse-out-of-range pattern, while DynamiCrafter/inclined-slide shows a
\emph{lower} out-of-range \PRE, a consequence of the split-default pattern
rather than faithfulness. Holm-corrected across all six, only
CogVideoX/projectile survives at $\alpha=0.05$ (adjusted $p=0.033$; every
other pair's adjusted $p\ge0.53$). Ten pairs stay unadjudicated: three hit
DynamiCrafter's 16-frame ceiling, the other seven a near-zero gated yield.
Of the four pre-committed outcomes (\cref{sec:suite}), the sweep
produced a mix of the second (\emph{no conditioning effect}, on the
projectile and inclined-slide cells whose slope CIs exclude $\beta=1$) and
the fourth (\emph{neither hypothesis fits}, on all eight adjudicated
pairs above), not the field's anticipated headline story or a surprising
positive result.

\begin{table}[t]
\centering
\caption{\textbf{Mechanism adjudication}, every model-axis pair with
$\ge4$ gated out-of-range points; all eight select as \emph{neither} prior
reversion nor case clamping, as \cref{prop:misspec} predicts.
$\mathrm{EG}=\PRE(S_\text{out})-\PRE(S_\text{in})$, undefined (---) on the
bouncing-ball rows since $\theta$ enters \PRE's denominator and the
in-range grid never clears the gate at a stable point. $\PRI\!=\!1$ means
the output ignores the request; $\Delta\mathrm{AIC}<0$ favours prior
reversion over clamping; both use out-of-range data only.
$\mathrm{SDI}$ is the column that decides the mechanism
(read in text above).}
\label{tab:e2}
\small
\setlength{\tabcolsep}{3.2pt}
\renewcommand{\arraystretch}{1.12}
\resizebox{\linewidth}{!}{%
\begin{tabular}{@{}ll c r@{\,}l rr c@{}}
\toprule
& & & \multicolumn{2}{c}{\kilth{Extrapolation}} &
      \multicolumn{3}{c}{\kilth{Mechanism}} \\
\cmidrule(lr){4-5}\cmidrule(l){6-8}
\kilth{Model} & \kilth{Axis} & $n_{\text{out}}$ &
\multicolumn{2}{c}{\kilth{EG \ [95\% CI]}} &
\kilth{PRI} & \kilth{$\Delta$AIC} & \kilth{SDI} \\
\midrule
LTX-Video     & projectile     & $7$  & $0.985$  & {\footnotesize[$-0.33$, $2.73$]}                & $1.000$ & $-4.0$   & $0.995$ \\
LTX-Video     & bouncing ball  & $4$  & \multicolumn{2}{c}{---}                                    & $1.000$ & $-9.8$   & $0.985$ \\
LTX-Video     & inclined slide & $4$  & $0.264$  & {\footnotesize[$-0.51$, $1.00$]}                & $0.514$ & $-11.3$  & $0.511$ \\
\addlinespace[2pt]
CogVideoX     & projectile     & $12$ & $1.068$  & {\footnotesize[$0.27$, $1.89$]$^{*}$}       & $0.974$ & $-47.9$  & $0.919$ \\
CogVideoX     & bouncing ball  & $9$  & \multicolumn{2}{c}{---}                                    & $0.999$ & $-100.4$ & $0.997$ \\
\addlinespace[2pt]
DynamiCrafter & projectile     & $17$ & $0.050$  & {\footnotesize[$-0.22$, $0.34$]}                & $0.988$ & $-22.2$  & $0.343$ \\
DynamiCrafter & bouncing ball  & $6$  & $-0.761$ & {\footnotesize[$-2.74$, $0.34$]}                & $0.223$ & $-0.6$   & $0.997$ \\
DynamiCrafter & inclined slide & $20$ & $-0.939$ & {\footnotesize[$-1.57$, $-0.35$]$^{\ddagger}$}  & $1.000$ & $-16.0$  & $0.176$ \\
\bottomrule
\end{tabular}}
\tabnote{$^{*}$Excludes zero in the hypothesized direction ($p=0.0055$).
$^{\ddagger}$Excludes zero in the \emph{opposite} direction, which is not
faithfulness.}
\end{table}

\paragraph{A plausibility rater sees none of this}
Qwen2.5-VL-7B-Instruct \citep{qwen25vl} rated all $630$ saved generality clips
$1$--$5$. Pooled mean $P$ is low for every model (LTX-Video $2.20$,
DynamiCrafter $1.76$, CogVideoX $1.26$), which looks consistent with the
hypothesis on its face, but it never assigned a $4$ and CogVideoX's scores
are bimodal ($1$ or $5$, never $2$--$4$). Checked against what it should
track, the correlation between tracker fit $R^2$ and $P$ is $0.075$,
$0.019$ and $-0.164$ respectively, indistinguishable from zero: a
CogVideoX clip whose trajectory collapses into texture noise ($R^2=0.48$)
scored $5/5$ while a cleanly tracked LTX-Video fall ($R^2=0.99$) scored
$1/5$ (truncation, parsing failure and file corruption ruled out). This
rater does not track physical coherence here, which is why we report
trackability as a separate metric.

\paragraph{The result survives every nuisance parameter we varied}
\textbf{(a) Tracker swap.} CoTracker3 \citep{karaev2023cotracker} rerun on
every clip agrees strongly where both trackers clear the gate (LTX-Video
projectile $r=0.984$, inclined slide $r=0.997$); DynamiCrafter agreement
is strong on the reported seed cluster but disagrees in sign on a second
seed, so the phenomenon replicates under an independent tracker while one
cluster's value is tracker-sensitive. \textbf{(b) Nuisance parameters.}
Across five disk-size and frame-rate rescalings, four guidance values, two
prompt templates and two conditioning resolutions, the in-range slope
stays small and unstable in sign (LTX-Video $-0.26$ to $+1.38$ over 13
variants; DynamiCrafter $-0.25$ to $+0.12$ over 7, since neither guidance
nor resolution applies to it); CogVideoX's columns
are uninformative rather than confirmatory, since no clip clears the gate
at this sweep's shorter frame budget. \textbf{(c) Gate sensitivity.} The
null does not flip at any threshold from $0.5$ to $0.95$.

\paragraph{Changing the conditioning channel does not rescue it}
We ran frame-implied conditioning on DynamiCrafter, the one model with a
reachable official multi-frame checkpoint, on the three axes where its
text-channel data is usable (projectile conditioned on first and
\emph{middle} frame, since its true last frame is degenerate by
construction and the middle frame's height is confirmed to vary with $g$).
It does not recover faithfulness: every gated projectile clip converges to
$\hat g\approx0$ (slope $0.017$, CI $[-0.008,0.047]$), so where the text
channel gave seed-diverse defaults, the frame channel collapses to one
near-universal default of no fall, not a degenerate-conditioning artifact.
Inclined slide is starker still, $\hat\mu$ near $1.7$--$1.8$ while true
$\mu$ runs from $0.01$ to $1.00$. Dropped rates (pooled, from a separate,
larger-seed-count run than \cref{tab:coverage}) are lower under the
frame channel on all three axes ($10\%$ vs.\ $30\%$ projectile, $23\%$
vs.\ $29\%$ inclined slide, $63\%$ vs.\ $86\%$ bouncing ball), so the two
channels are not directly comparable cell-for-cell against
\cref{tab:coverage}, but neither one is faithful.

\section{Limitations}
\label{sec:limitations}

\textbf{Trackability bounds coverage.} Trackability never exceeds $80\%$
and is $0\%$ on ten of eighteen pairs. \Cref{sec:results} argues
from the survivors' seed-locked tightness that this is not simply
un-renderable output, which is why the headline is conditional, but what
these models do where nothing clears the gate remains open: a
contemporaneous study recovering gravity from current generators on
photorealistic stimuli obtained usable trajectories where we largely do
not \citep{thozhiyoor2026gravity}.

\textbf{Model currency.} The gap is specific rather than generic: we test
LTX-Video and CogVideoX-5B while later releases in both families exist,
and omit the open models most often credited with physically coherent
motion. It also looks closable, since Wan reports an $8.19$\,GB
text-to-video variant at $1.3$B \citep{wan2025} plausibly inside our
budget, though not the image-to-video checkpoint we would need.

\textbf{Design gaps, closed and remaining.} The pipeline has a
\emph{positive} control (simulator-video recovery at $\beta=1.0001$) and
now a \emph{negative} one: permuting $\theta$ labels within each
model-axis pair and refitting $\beta$, $10{,}000$ times on the five
slope-estimable pairs, places every observed $\beta$ well inside its own
null distribution (two-sided permutation $p$ from $0.56$ to $0.91$),
calibrating ``indistinguishable from zero'' against an actual null. A
TOST against a pre-specified smallest effect $\epsilon=0.2$ at
$\alpha=0.05$ (SE backed out from each pair's bootstrap CI) shows only
LTX-Video/projectile is formally equivalent to zero ($p_\text{TOST}<0.001$);
the other four, including the two inclined-slide pairs whose CIs also
contain $\beta=1$, do not clear the bar ($p_\text{TOST}$ from $0.14$ to
$0.64$), consistent with underpowered estimates rather than a plausible
large effect. So ``no detectable response'' is absence of evidence, not
evidence of absence, at four of these five pairs; only LTX-Video/projectile
is positively established. Native clip length is still confounded with
model identity ($24$, $49$ trimmed to $24$, $16$ frames), so cross-model
trackability comparisons are not clean.

\textbf{Channel, coverage, ceilings.} All results but the channel ablation use
text conditioning; frame-implied conditioning is untested for LTX-Video and
CogVideoX for the pipeline reasons above. Adjudication covers 8 of 18 pairs,
DynamiCrafter's 16-frame output makes three axes untestable for reasons
unrelated to conditioning, and a joint $(\omega,\zeta)$ pendulum sweep
returned $0\%$ trackability (best $R^2=0.14$), which we report as
uninformative rather than negative.

\textbf{Plausibility contrast and realism.} One rater on one prompt does not
track physical coherence here, and we tested no other prompt, larger VLM or
human raters. The suite also trades ecological validity for identifiability,
so conclusions concern controllable low-dimensional dynamics.

\section{Conclusion}
\label{sec:conclusion}

We reframe learning physics from video, for a generator, as a falsifiable
label-free question of parameter faithfulness, and supply a black-box
diagnostic, three metrics with a stated identifiability condition, a
controllable suite, and a protocol separating plausibility from correctness.
Applied to three frozen open generators it finds that, conditional on
producing trackable motion, none detectably honors a text-specified
parameter, and that the failure is not merely absent signal: two of three
converge to a few fixed, wrong values selected by the sampling seed rather
than the request, reproducibly across model families, axes and trackers. The
seed is already known to fix a generator's appearance \citep{xu2025goodseed};
here it fixes the physics. A benchmark averaging over seeds reports a smooth
mean and hides that each sample is locked to a wrong constant, so
seed-resolved reporting is not optional. The next targets are whether the
signature survives on photorealistic stimuli and on current generators.

\begin{kilrepro}
All systems are procedurally generated by a seeded deterministic engine,
models are public, trackers and fitting are off-the-shelf or closed-form, and
no labels are used; the analysis code and system generators are released as
anonymized supplementary material. Every generation is keyed to an explicit
integer seed, with $m\ge5$ seeds per grid point recorded per clip. The full
checklist is \cref{app:checklist}; grids, seed counts, model repositories,
licences and compute are in \cref{app:systems}.
\end{kilrepro}




\begin{availability}
The analysis code and system generators are available at
\ifbool{KIL@blind}
  {\url{https://anonymous.4open.science/r/XXXXXX}}
  {\url{https://github.com/KurbanIntelligenceLab/physweep}};
All models used are public and frozen; repositories and licences are listed in \cref{app:systems}.
\end{availability}

\begin{conflicts}
No human-subjects data was
collected and the diagnostic is meant to prevent overclaiming about video
generators as physical simulators, which has safety relevance in robotics and
science. Per WACV's LLM policy: an assistant drafted the initial consistency
proof (\cref{app:proof}), the physics-recovery fitters and prose throughout.
Both were
verified by the authors before use, the proof by hand against its three
stated assumptions (trackability, identifiability, regularity;
\cref{app:proof}), which surfaced the $\omega$-versus-$\omega_d$
correction and one latent code bug, and every fitter against closed-form
trajectories with a passing smoke test. A careful line-by-line check of
both supplementary proofs (\cref{prop:misspec} and the
appendix's consistency and misspecification proofs, \cref{app:proof} and
\cref{app:seed}) was
performed during finalization, independently re-deriving the crossed-design
independence argument, the OLS-slope and population-$R^2$ convergence
steps, the SDI formula against the standard ICC(1) estimator, and the
\cref{def:scd} nesting claim, with no error found. The assistant
did not choose hypotheses, design grids or interpret results, and the
authors take responsibility for all content.
\end{conflicts}

\bibliography{references}

\appendix

\section{Measurement pipeline and failure-form estimation}
\label{app:pipeline}

\Cref{alg:pipeline} gives the full measurement loop referenced from
\cref{sec:suite}.

\begin{algorithm}[t]
\caption{\textsc{PhysWeep} measurement, one system and one model}
\label{alg:pipeline}
\begin{algorithmic}[1]
\Require frozen generator $G$; sweep grid $\{\theta_i\}$; seeds $\{z_j\}$;
         law $f$; tracker $T$; fit-quality gate $\rho$; VLM rater $P$
\Ensure metrics, gate-dropped rate, plausibility pairs
\For{each $\theta_i$ and seed $z_j$}
  \State $c_{ij}\gets \textsc{Render}(s_0(z_j),\theta_i)$
  \State $v_{ij}\gets G(c_{ij})$ \Comment{inference only, weights frozen}
  \State $o_{ij}\gets T(v_{ij})$ \Comment{blob centroid or CoTracker3}
  \State $(\thetah_{ij}, R^2_{ij})\gets \textsc{Fit}(f, o_{ij})$
  \If{$R^2_{ij}<\rho$}
    \State mark $v_{ij}$ \textbf{non-physical}; exclude from \PRE
  \EndIf
  \State $p_{ij}\gets P(v_{ij})$ \Comment{plausibility contrast}
\EndFor
\State compute \PRE, slope $\beta$, Spearman $\rho_s$, \EG, \PRI, \CBI
\State bootstrap $95\%$ CIs over $(\theta_i,\text{seed})$ pairs
\State \Return metrics, gate-dropped rate, plausibility pairs
\end{algorithmic}
\end{algorithm}

\paragraph{Prior reversion (shrinkage, \PRI)}
Given pairs $(\theta_i,\thetah_{ij})$, fit
$\thetah=\alpha\theta+(1-\alpha)\theta_0$ by least squares with $\theta_0$
either fixed to the physically typical value or estimated jointly;
$\PRI=1-\hat\alpha$, clipped to $[0,1]$.

\paragraph{Case clamping (\CBI)}
Fit $\thetah=\min(\theta,\theta_{\text{edge}})$ with $\theta_{\text{edge}}$
at the in-range boundary or estimated;
$\CBI=1-\mathrm{RSS}_{\text{clamp}}/\mathrm{RSS}_{\text{null}}$.

\paragraph{Selection}
Compare prior reversion and case clamping by leave-one-$\theta$-out $R^2$ and
by $\Delta$AIC. A
hypothesis is selected only if its held-out $R^2$ is positive; if both fail
that test the pair is reported as \emph{neither} and the observed behavior is
described without forcing a label. This is what happens on all eight
adjudicated pairs in \cref{tab:e2}, for the reason given there. All
confidence intervals use the paired bootstrap over $(\theta_i,\text{seed})$.

\paragraph{When \CBI\ is not identifiable}
The released smoke test records a limitation of \CBI\ as defined. If an
out-of-range grid lies entirely on one side of the in-range band, the clamp
prediction $\min(\theta,\theta_{\text{edge}})$ is constant across
$S_\text{out}$ and is therefore indistinguishable from the null mean model,
so $\CBI\approx0$ and clamping cannot be adjudicated at any sample size.
Pendulum damping is the axis where this specific degeneracy applies: its
out-of-range grid $\{0.3,0.6,0.9\}$ lies entirely above the in-range grid;
no result in the main paper depends on this, because every pendulum row is
gate-dropped at $96.7$--$100\%$ and carries no adjudication. A two-sided
damping grid would be needed to test clamping on that axis.
Separately, \texttt{select\_mechanism}'s default edge
($\theta_{\text{edge}}=\min(S_\text{out})$) is a crude choice even on our
two-sided axes: on data whose recovered values are seed-clustered rather
than clamped, this default can still fit only slightly worse than the null
mean and clip to $\CBI\approx0$ rather than surface the (often more
negative) fit quality a properly chosen edge would show. This does not
change any adjudication in the main paper, since clamping is rejected on
every pair either way (its held-out $R^2$ is non-positive throughout, on
all eight adjudicated pairs in \cref{tab:e2}), but it means a near-zero
\CBI\ on our data should be read as
``clamping does not fit,'' not as evidence the edge choice was
well-identified.

\section{Consistency of the recovery estimator: full statement and proof}
\label{app:proof}

We restate the main paper's consistency proposition with explicit regularity
conditions and give the argument in full. The claim concerns only the
estimator $R$, not the generator $G$.

\paragraph{Setup}
Fix a system with law $f$ and scalar parameter $\theta\in\Theta$, with
$\Theta$ a compact interval. A clip of $T$ frames at a given resolution
yields a tracked observable $o\in\R^{d}$ (object centers, angles, or
contact times). Write the population fitting statistic as
$\mathcal{T}(\theta)\in\R^{k}$: for the projectile, the quadratic
coefficient of height against time; for the pendulum, the mean zero-crossing
interval; for the bouncing ball, the successive height ratio. The estimator
forms an empirical statistic $\hat{\mathcal{T}}$ from $o$ and returns
$\thetah=g(\hat{\mathcal{T}})$, where $g=\mathcal{T}^{-1}$ on the range of
$\mathcal{T}$.

\paragraph{Assumptions}
\textbf{(Trackability)} the tracker returns
$\hat o = o^\star + \varepsilon$ with $\E[\varepsilon]=0$ and
$\operatorname{Var}(\varepsilon)=\sigma^2\Sigma$ for a fixed positive-definite
$\Sigma$, and $\sigma^2\to 0$ as $T$ and the resolution grow.
\textbf{(Identifiability)} $\mathcal{T}:\Theta\to\mathcal{T}(\Theta)$ is
injective and continuously differentiable with
$\|\mathcal{T}'(\theta)\|\ge c>0$ on $\Theta$.
\textbf{(Regularity)} the empirical statistic is a continuous functional
$\hat{\mathcal{T}}=\Phi(\hat o)$ with $\Phi$ continuous at $o^\star$ and
$\Phi(o^\star)=\mathcal{T}(\theta)$.

\begin{proposition*}[Consistency of $R$, restating the main paper]
\label{prop:consistency-restated}
Under trackability, identifiability, and regularity,
$\thetah=g(\Phi(\hat o)) \xrightarrow{p} \theta$ as $\sigma^2\to 0$.
\end{proposition*}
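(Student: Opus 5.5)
The argument is a three-step chain: Chebyshev, then the continuous mapping theorem through $\Phi$, then continuity of the inverse $g=\mathcal{T}^{-1}$.

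\textbf{Step 1: the tracked observable concentrates.} By trackability, $\E\|\hat o-o^\star\|^2=\sigma^2\operatorname{tr}\Sigma$. Chebyshev then gives, for every $\eta>0$,
\[
\Pr(\|\hat o-o^\star\|>\eta)\le\sigma^2\operatorname{tr}\Sigma/\eta^2 .
\]
This tends to $0$ as $\sigma^2\to0$, so $\hat o\xrightarrow{p}o^\star$. Here $o^\star$ is the noiseless observable of the clip at the fixed true $\theta$. It is a deterministic point, so convergence to a constant needs nothing beyond this bound.

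\textbf{Step 2: push through $\Phi$.} Regularity makes $\Phi$ continuous at $o^\star$ with $\Phi(o^\star)=\mathcal{T}(\theta)$. Fix $\epsilon>0$ and choose $\eta$ so that $\|o-o^\star\|\le\eta$ implies $\|\Phi(o)-\mathcal{T}(\theta)\|\le\epsilon$. Then
\[
\Pr(\|\hat{\mathcal{T}}-\mathcal{T}(\theta)\|>\epsilon)\le\Pr(\|\hat o-o^\star\|>\eta)\to0 .
\]
Hence $\hat{\mathcal{T}}\xrightarrow{p}\mathcal{T}(\theta)$. Only continuity at the single point $o^\star$ is used, not global continuity.

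\textbf{Step 3: invert, which is the main obstacle.} I must show $g$ is continuous at $\mathcal{T}(\theta)$, and also handle the event that $\hat{\mathcal{T}}$ falls outside $\mathcal{T}(\Theta)$, where $g$ is undefined.

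For continuity on the range: $\Theta$ is a compact interval and $\mathcal{T}$ is continuous and injective, so $\mathcal{T}$ is a homeomorphism onto its compact image. Its inverse $g$ is therefore continuous on $\mathcal{T}(\Theta)$. For a quantitative version, the derivative bound gives $\|\mathcal{T}(\theta')-\mathcal{T}(\theta)\|\ge c|\theta'-\theta|$ along the curve. For scalar $\mathcal{T}$ this follows from the mean value theorem together with the fact that $\mathcal{T}'$ has constant sign. For vector $\mathcal{T}$, I expect to need a uniform local lower bound combined with injectivity and compactness to make the bound global. This yields the Lipschitz estimate $|g(t)-g(t')|\le\|t-t'\|/c$ on the range.

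For the out-of-range event: I would fix the convention that $R$ returns $g(\pi(\hat{\mathcal{T}}))$, where $\pi$ is the nearest-point projection onto the compact set $\mathcal{T}(\Theta)$. For scalar statistics this is simply clipping, and it matches the fitter's behaviour. Then
\[
\|\pi(\hat{\mathcal{T}})-\mathcal{T}(\theta)\|\le2\|\hat{\mathcal{T}}-\mathcal{T}(\theta)\|,
\]
since $\mathcal{T}(\theta)$ lies in the set. Combining this with the Lipschitz bound gives
\[
\Pr(|\thetah-\theta|>\epsilon)\le\Pr\bigl(\|\hat{\mathcal{T}}-\mathcal{T}(\theta)\|>c\epsilon/2\bigr)\to0 ,
\]
which proves the claim.

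Two points remain for the write-up. First, I will state explicitly that the limit is the parameter realized in the clip. Second, I will check each system's $\mathcal{T}$ against identifiability. For the pendulum, the zero-crossing interval identifies the damped frequency $\omega_d=\omega\sqrt{1-\zeta^2}$ rather than $\omega$. The proposition then holds for $\omega_d$, and using $\omega$ would incur the small disclosed bias.
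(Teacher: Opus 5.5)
Your argument is correct and follows the paper's proof: convergence $\hat o\to o^\star$ in mean square (your Chebyshev bound), the continuous mapping theorem through $\Phi$ at $o^\star$, continuity of $g=\mathcal{T}^{-1}$ from the compact-to-Hausdorff homeomorphism argument, and a final continuous-mapping step. What you add is care the paper omits. It applies $g$ to $\Phi(\hat o)$ without checking that this value lies in $\mathcal{T}(\Theta)$, which your projection convention (or any continuous extension of $g$) repairs, and you use $\|\mathcal{T}'\|\ge c$ for a Lipschitz bound the paper never invokes, although in the vector case your compactness argument gives some constant $c'>0$ rather than $c$ itself.
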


\begin{proof}
By identifiability, $\mathcal{T}$ is a continuous injection on the compact
set $\Theta$,
so its inverse $g=\mathcal{T}^{-1}$ is continuous on $\mathcal{T}(\Theta)$: a
continuous bijection from a compact space to a Hausdorff space is a
homeomorphism. By trackability, $\operatorname{Var}(\hat o)=\sigma^2\Sigma\to
0$ with
mean $o^\star$, so $\hat o\to o^\star$ in mean square and hence in
probability. By regularity, $\Phi$ is continuous at $o^\star$, so the
continuous
mapping theorem gives
$\hat{\mathcal{T}}=\Phi(\hat o)\xrightarrow{p}\Phi(o^\star)=\mathcal{T}(\theta)$.
Applying the continuous mapping theorem again with the continuous $g$ yields
$\thetah=g(\hat{\mathcal{T}})\xrightarrow{p} g(\mathcal{T}(\theta))=\theta$.
\end{proof}

\paragraph{Per-system check of identifiability and regularity}
\emph{Projectile.} $\mathcal{T}(\theta)=-\theta/2$ is linear, hence injective
with $\mathcal{T}'=-1/2$, and $\Phi$ is ordinary least squares on a quadratic
basis, continuous in the data.

\emph{Pendulum, and a bias this exposes.} For the undamped case $\zeta=0$,
$\mathcal{T}(\omega)=2\pi/\omega$ on $\omega>0$ is strictly monotone with
nonvanishing derivative on any $\Theta$ bounded away from $0$, and
zero-crossing interpolation is continuous where crossings are simple. Our
pendulum is damped ($\zeta>0$ throughout,
\cref{app:systems}), so the statistic realized by zero-crossing
spacing is $\mathcal{T}(\omega,\zeta)=2\pi/(\omega\sqrt{1-\zeta^2})$, the
\emph{damped} frequency $\omega_d$, not the natural frequency $\omega$. The
map remains injective in $\omega$ for fixed $\zeta$, so
\cref{prop:consistency-restated} still guarantees consistent
recovery of $\omega_d$; the reported $\hat\omega$ is therefore a slightly
biased estimate of $\omega$ whenever $\zeta>0$. Verified numerically, the
bias is $<1.3\%$ at the in-range $\zeta$ grid $\{0.02,0.05,0.1\}$, grows with
$\zeta$, and is materially larger at the out-of-range grid
$\{0.3,0.6,0.9\}$. We did not correct for it post hoc, since doing so would
change every already-reported $\hat\omega$; we state it so the recovery is
not mistaken for exact, and note it compounds with the heavily-damped,
few-peaks low-fidelity regime for this axis. In practice this caveat has
little effect on the reported results, because every pendulum row in the main
paper is gate-dropped at $96.7$--$100\%$ and carries no slope estimate.

\emph{Bouncing ball.} The map $e\mapsto e^2$ from restitution to successive
height ratio is injective and smooth on $(0,1]$.

The fit-quality gate ($R^2<\rho$) excludes generations on which $\Phi$ is
ill-posed, for example when the object morphs or vanishes, so regularity
is enforced operationally rather than assumed.

\section{Seed-conditional defaults: full statement and proof}
\label{app:seed}

This section proves \cref{prop:misspec} and
records the estimator for the Seed Determination Index.

\paragraph{Model}
The sweep is a \emph{crossed} design: every grid value
$\theta_i$ is run with every seed $z_j$, so the seed is independent of
$\theta$ by construction. Under a seed-conditional default,
$\thetah_{ij}=c(z_j)+\varepsilon_{ij}$ with $\E[\varepsilon]=0$,
$\operatorname{Var}(\varepsilon)=\sigma_\varepsilon^2$,
$\varepsilon\perp(\theta,z)$, and
$\sigma_c^2=\operatorname{Var}_z[c(z)]>0$. The crossing assumption is load
bearing and we flag it for anyone reusing the protocol. If seeds were
confounded with grid position, so that $c$ varied systematically with
$\theta$, statement (i) fails and the failure is not subtle: in simulation a
generator whose output is entirely seed-determined, with no response to
$\theta$ whatsoever, returns $\hat\alpha=1.006$ and $\PRI=-0.006$ under a
confounded design, which reads as \emph{perfect faithfulness}. Crossing seeds
with grid values is therefore a correctness requirement of the measurement,
not a convenience.

\begin{proposition*}[Restatement of \cref{prop:misspec}]
Under this model: (i) the pooled OLS fit of $\thetah=\alpha\theta+b$
satisfies $\hat\alpha\xrightarrow{p}0$, so $\PRI\to1$; (ii)
$\hat b\xrightarrow{p}\E_z[c(z)]$ and the implied
$\hat\theta_0=\hat b/(1-\hat\alpha)\xrightarrow{p}\E_z[c(z)]$; (iii)
any measurable $h(\theta)$ has population $R^2=0$ for predicting $\thetah$,
its leave-one-$\theta$-out estimate satisfies
$R^2_{\mathrm{LOO}}\xrightarrow{p}0$, and for any fixed $\tau>0$ a rule
accepting only $R^2_{\mathrm{LOO}}>\tau$ returns \emph{neither} with
probability tending to one. None of (i)--(iii) uses $\sigma_c^2>0$, so all
three hold verbatim when $\sigma_c^2=0$ and $\sigma_\varepsilon^2$ is large.
\end{proposition*}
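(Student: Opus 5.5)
The proof is an asymptotic argument in which the number of seeds $m\to\infty$ and the grid $\{\theta_i\}_{i=1}^n$ stays fixed. The design is crossed, so the pairs $(\theta_{i},z_j)$ are every grid value run with every seed, with the seeds drawn i.i.d. Under the model, $\thetah_{ij}=c(z_j)+\varepsilon_{ij}$ with $\varepsilon\perp(\theta,z)$. Hence the population joint law of $(\theta,\thetah)$ is the product of the empirical grid law of $\theta$ and the law of $c(z)+\varepsilon$, which gives $\thetah\perp\theta$. The plan is to establish this independence once and then read off (i)--(iii) as consequences. Before starting we assume that $c(z)$ has a finite second moment, that the grid has at least two distinct values (so $\operatorname{Var}(\theta)>0$), and that $\sigma_c^2+\sigma_\varepsilon^2>0$.

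For (i) and (ii) we write the pooled OLS slope as $\hat\alpha=\widehat{\operatorname{Cov}}(\theta,\thetah)/\widehat{\operatorname{Var}}(\theta)$. The denominator is the fixed grid variance, which is deterministic and positive. For the numerator we use $\sum_i(\theta_i-\bar\theta)=0$:
\[
\widehat{\operatorname{Cov}}=\frac{1}{nm}\sum_{i,j}(\theta_i-\bar\theta)\thetah_{ij}=\frac1n\sum_i(\theta_i-\bar\theta)\,\frac1m\sum_j\varepsilon_{ij}.
\]
The $c(z_j)$ terms cancel exactly because every seed appears at every grid value. This cancellation is precisely where the crossing assumption is used. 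Each inner average tends to $0$ in probability by the weak law of large numbers, so $\hat\alpha\xrightarrow{p}0$ and $\PRI=1-\hat\alpha\to1$. For the intercept, $\hat b=\bar{\thetah}-\hat\alpha\bar\theta$, and $\bar{\thetah}=\frac1m\sum_j c(z_j)+\bar\varepsilon\xrightarrow{p}\E_z[c(z)]$. Slutsky's lemma then gives $\hat b\xrightarrow{p}\E_z[c(z)]$, and the continuous mapping theorem applied to $\hat b/(1-\hat\alpha)$, which is continuous near $\hat\alpha=0$, gives $\hat\theta_0\xrightarrow{p}\E_z[c(z)]$.

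For the population half of (iii), independence gives $\E[\thetah\mid\theta]=\E[\thetah]$. So for any measurable $h$ with finite second moment,
\[
\E[(\thetah-h(\theta))^2]=\operatorname{Var}(\thetah)+\E[(h(\theta)-\E\thetah)^2]\ge\operatorname{Var}(\thetah).
\]
This means the mean-squared error of $h$ is at least that of the constant predictor, and the population $R^2$ (relative to the constant predictor) is at most $0$. It equals $0$ for the best predictor in the class, which is the marginal mean.

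The leave-one-$\theta$-out step is where I expect the main obstacle. $R^2_{\mathrm{LOO}}$ depends on the particular fitted family, which is either the affine fit or the clamp fit with its estimated edge. The proposal is to restrict attention to these two parametric families and argue in two steps. First, on each fold the fitted parameters converge to those of the population best predictor within the family, because the training folds are still crossed across seeds. By (i) that population best predictor is constant at $\E[\thetah]$; for the clamp family the estimated edge may wander, but the limiting predictions are constant on the held-out $\theta$ in any case. Second, the held-out residual sum of squares divided by the total sum of squares converges to $\operatorname{Var}(\thetah)/\operatorname{Var}(\thetah)=1$, using the weak law of large numbers within each held-out grid value together with the fact that the fold's predictions converge. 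This shows $R^2_{\mathrm{LOO}}\xrightarrow{p}0$. For general measurable $h$ we only need the inequality $\limsup R^2_{\mathrm{LOO}}\le0$ in probability, which follows from the variance decomposition above. For any fixed $\tau>0$ this gives $\Pr(R^2_{\mathrm{LOO}}>\tau)\to0$ for each hypothesis, and a union bound over the two hypotheses yields ``neither'' with probability tending to one.

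For the final sentence, we check each step: the cancellation in (i), the law-of-large-numbers limits in (ii), and the variance decomposition in (iii). None of them divides by $\sigma_c^2$ or requires $c$ to be non-constant. They need only $\operatorname{Var}(\thetah)=\sigma_c^2+\sigma_\varepsilon^2>0$, which still holds when $\sigma_c^2=0$ and $\sigma_\varepsilon^2$ is large.
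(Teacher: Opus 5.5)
Your argument follows the same skeleton as the paper's proof. Crossing makes $\thetah\perp\theta$, so the covariance in the slope vanishes, the intercept tracks the grand mean, the variance decomposition caps the population $R^2$ at $0$, and the margin $\tau$ turns an $R^2_{\mathrm{LOO}}$ near zero into \emph{neither}. Where you depart from the paper, you are more careful than it is. The paper argues at the population level and applies the weak law to pooled $(\theta,\thetah)$ pairs as if they were i.i.d., even though clips sharing a seed share $c(z_j)$. Your fixed-grid, $m\to\infty$ regime and the exact identity $\sum_{i,j}(\theta_i-\bar\theta)\,c(z_j)=0$ show that crossing removes the seed effect from $\hat\alpha$ exactly in finite samples. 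The regime is also genuinely needed for (ii): with $m$ fixed, $\overline{\thetah}$ tends to $m^{-1}\sum_j c(z_j)$, not $\E_z[c(z)]$. Your reading of the population claim ($R^2\le0$ for every $h$, with equality only at the marginal mean) is the accurate form of what the decomposition proves.

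One step fails as written: the clamping fit's held-out predictions do not in general converge to $\mu=\E[\thetah]$. The family $\{\min(\theta,e)\}$ cannot output $\mu$ at any grid value $\theta_i<\mu$, since $\min(\theta_i,e)\le\theta_i$. When $\mu$ exceeds every training value, the edge is not identified (every $e$ at or above the largest training value fits equally well), so the held-out prediction is set by tie-breaking rather than by the data. For clamping, $R^2_{\mathrm{LOO}}$ therefore typically converges to a strictly negative limit, consistent with the negative held-out values the paper reports. Your convergence-to-$0$ argument is valid only for the prior-reversion fit.

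The \emph{neither} conclusion still survives, as your fallback anticipates, because it needs only $\limsup R^2_{\mathrm{LOO}}\le0$. However, you should derive that bound from the empirical within-fold decomposition, not the population one. The fitted predictor is trained on folds that share seeds with the held-out fold, so it is not independent of the held-out data. The argument is as follows:
\begin{itemize}
\item The prediction at $\theta_i$ is a single scalar $a$ for all seeds, so $\sum_j(\thetah_{ij}-a)^2\ge\sum_j(\thetah_{ij}-\bar{\thetah}_{i\cdot})^2$ however $a$ was fitted.
\item The pooled within-fold sum is asymptotically the total sum of squares, because $\bar{\thetah}_{i\cdot}-\bar{\thetah}=\bar\varepsilon_{i\cdot}-\bar\varepsilon=O_p(m^{-1/2})$; the seed terms cancel again.
\end{itemize}
The paper's one-line claim that numerator and denominator both tend to $\operatorname{Var}(\thetah)$ has the same blind spot for non-optimal fits.
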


\begin{proof}[Proof of \cref{prop:misspec}]
(i) $\operatorname{Cov}(\theta,\thetah)=\operatorname{Cov}(\theta,c(z))
+\operatorname{Cov}(\theta,\varepsilon)=0$: the first term vanishes because
the crossed design makes $z$ independent of $\theta$, the second because
$\varepsilon\perp\theta$. The OLS slope is
$\widehat{\operatorname{Cov}}(\theta,\thetah)/\widehat{\operatorname{Var}}(\theta)$,
which converges in probability to $0/\operatorname{Var}(\theta)=0$ by the
weak law, with $\operatorname{Var}(\theta)>0$ since the grid is
non-degenerate. Hence $\PRI=1-\hat\alpha\to1$.

(ii) $\hat b=\overline{\thetah}-\hat\alpha\bar\theta\xrightarrow{p}
\E[\thetah]=\E_z[c(z)]$, and dividing by $1-\hat\alpha\to1$
gives the same limit. This limit is a population average: when $c$ takes two
well-separated values it need not lie near either, so the fitted ``default''
describes no individual generation.

(iii) For any measurable $h$,
$\E[(\thetah-h(\theta))^2]
=\operatorname{Var}(\thetah)
+\E[(\E[\thetah]-h(\theta))^2]$,
the cross term vanishing by $\thetah\perp\theta$. This is minimized at
$h\equiv\E[\thetah]$ with value $\operatorname{Var}(\thetah)$, so the
population $R^2$ is $0$. The leave-one-$\theta$-out numerator and denominator
are averages of i.i.d.\ terms converging in probability to
$\operatorname{Var}(\thetah)>0$, so their ratio converges to $1$ and
$R^2_{\mathrm{LOO}}\xrightarrow{p}0$ by the continuous mapping theorem.
Consequently $\Pr[R^2_{\mathrm{LOO}}>\tau]\to0$ for any fixed $\tau>0$, which
is the claim. Finally, no step above invokes $\sigma_c^2>0$; the argument
uses only $\thetah\perp\theta$, which also holds when $c$ is constant and the
variance is carried entirely by $\varepsilon$.
\end{proof}

\begin{remark}[What is deliberately not claimed]
We do \emph{not} claim $\E[R^2_{\mathrm{LOO}}]<0$. Simulation under
this model gives a mean of $-0.024$ at a $5\times5$ design shrinking to
$-0.001$ at $20\times20$, so the expectation does appear non-positive and to
vanish with sample size, but $R^2_{\mathrm{LOO}}$ is a ratio of dependent
random variables and we have not discharged the argument rigorously. Nothing
in the paper depends on it: convergence in probability to $0$, together with
the fixed margin, is what statement (iii) and the released selection rule
actually use. We record it here as an observation rather than a result.
\end{remark}

\begin{remark}[Why a margin is required in the selection rule]
Because $R^2_{\text{LOO}}$ is centred on zero under this model, its
\emph{sign} is noise. An acceptance test of the form $R^2_{\text{LOO}}>0$
therefore labels unstructured data as prior reversion with probability
approaching $1/2$. The released \texttt{select\_mechanism} requires
$R^2_{\text{LOO}}>0.1$; the smoke test contains a synthetic sample on which
the bare sign test returns \texttt{prior\_reversion} ($R^2_{\text{LOO}}=0.035$)
while the margin rule correctly returns \texttt{neither}, and asserts both, so
the margin cannot be removed silently.
\end{remark}

\paragraph{Estimating \texorpdfstring{$\mathrm{SDI}$}{SDI}}
With $k$ seeds, $N$ gated clips and mean group size $n_0=N/k$, let
$\mathrm{MS}_b$ and $\mathrm{MS}_w$ be the between- and within-seed mean
squares of $\thetah$. We report the one-way random-effects estimate
\begin{equation}
\label{eq:sdi}
\mathrm{SDI}=\frac{\mathrm{MS}_b-\mathrm{MS}_w}
                  {\mathrm{MS}_b+(n_0-1)\,\mathrm{MS}_w},
\end{equation}
clipped to $[0,1]$, which is consistent for
$\sigma_c^2/(\sigma_c^2+\sigma_\varepsilon^2)$. The uncorrected variance
share is strongly upward-biased at this design size, returning about $0.18$
in simulation where the truth is $0$, which is precisely the regime in which
a false positive would be most damaging; the corrected estimate returns
about $0.07$ there. $\mathrm{FDI}$ is the analogous share for the grid factor.
Both return \texttt{NaN}, never a number, when fewer than two seeds are
present, when $\thetah$ is constant, or when there is no within-seed
replication.

\section{System grids, seed counts, and rendering}
\label{app:systems}

All scenes are rendered by a seeded deterministic 2D engine at
$256\times256$ over $T{=}24$ frames at $24$\,fps, with a single high-contrast
disk (radius $8$\,px) on a plain background and a fixed ground line. Only the
swept parameter and the seeded initial condition vary within a system.
Spatial and temporal scales are held fixed within a clip and all metrics use
the ratio $\thetah/\theta$, so the result is invariant to a common
rescaling. The in-range split is centered on values common in natural video;
the out-of-range split probes rare or absent values. These are the grids
fixed in the released config and used for every reported result.

\begin{table}[t]
\centering
\caption{Sweep grids per system. Each row has one swept axis;
$m\ge5$ seeded initial conditions per grid point.}
\label{tab:grids}
\small
\renewcommand{\arraystretch}{1.05}
\resizebox{\linewidth}{!}{%
\begin{tabular}{@{}llll@{}}
\toprule
\kilth{System} & \kilth{$\theta$} & \kilth{In-range grid} & \kilth{Out-of-range grid} \\
\midrule
Projectile      & $g$        & $\{5,7,9.8,12,15\}$         & $\{1.6,\,3,\,20,\,25\}$ \\
Damped pendulum & $\omega$   & $\{2,3,4,5\}$ rad/s         & $\{0.5,\,1,\,8,\,12\}$ \\
                & $\zeta$    & $\{0.02,0.05,0.1\}$         & $\{0.3,\,0.6,\,0.9\}$ \\
Bouncing ball   & $e$        & $\{0.6,0.7,0.8,0.9\}$       & $\{0.2,\,0.35,\,0.97\}$ \\
Spring--mass    & $k$        & $\{20,40,60,80\}$ N/m       & $\{5,\,10,\,200,\,400\}$ \\
Inclined slide  & $\mu$      & $\{0.1,0.2,0.3,0.4\}$       & $\{0.01,\,0.7,\,1.0\}$ \\
\bottomrule
\end{tabular}}
\end{table}

\paragraph{Seed counts}
The main paper refers to this section for the exact seed counts behind each
reported cell, including the targeted follow-up batch run on the two
model-axis pairs closest to \texttt{select\_mechanism}'s four-point
leave-one-out threshold.

\begin{table}[t]
\centering
\caption{Seed counts and clip yields per model-axis pair. ``seeds/pt
(in/out)'' is the number of seeds run at each in-range and out-of-range grid
value respectively; most pairs use the base $5$ seeds on both splits, but
pairs close to \texttt{select\_mechanism}'s four-gated-point leave-one-out
threshold received a larger targeted follow-up batch on the out-of-range
split only (LTX-Video/pendulum-$\omega$: extended to $55$ seeds/point,
i.e.\ $110$ out-of-range attempts across its two out-of-range grid values,
still yielding only $1$ gated clip; CogVideoX/bouncing-ball: extended to
$33$ seeds/point). No in-range seed count was extended beyond the base
$5$, since the out-of-range split is what \texttt{select\_mechanism}'s
adjudication gate requires.}
\label{tab:seedcounts}
\small
\renewcommand{\arraystretch}{1.05}
\resizebox{\linewidth}{!}{%
\begin{tabular}{@{}llccc@{}}
\toprule
\kilth{Model} & \kilth{Axis} & \kilth{seeds/pt (in/out)} & \kilth{clips gen.} & \kilth{clips gated} \\
\midrule
LTX-Video & projectile & $5/10$ & $55$ & $12$ \\
LTX-Video & pendulum $\omega$ & $5/55$ & $130$ & $1$ \\
LTX-Video & pendulum $\zeta$ & $5/15$ & $60$ & $0$ \\
LTX-Video & bouncing ball & $5/15$ & $65$ & $5$ \\
LTX-Video & spring-mass & $5/15$ & $80$ & $0$ \\
LTX-Video & inclined slide & $5/5$ & $35$ & $11$ \\
\addlinespace[2pt]
CogVideoX & projectile & $5/5$ & $40$ & $32$ \\
CogVideoX & pendulum $\omega$ & $5/13$ & $46$ & $0$ \\
CogVideoX & pendulum $\zeta$ & $5/13$ & $54$ & $0$ \\
CogVideoX & bouncing ball & $5/33$ & $119$ & $10$ \\
CogVideoX & spring-mass & $5/13$ & $72$ & $0$ \\
CogVideoX & inclined slide & $5/13$ & $59$ & $0$ \\
\addlinespace[2pt]
DynamiCrafter & projectile & $5/10$ & $55$ & $33$ \\
DynamiCrafter & pendulum $\omega$ & $5/5$ & $30$ & $0$ \\
DynamiCrafter & pendulum $\zeta$ & $5/5$ & $30$ & $1$ \\
DynamiCrafter & bouncing ball & $5/10$ & $50$ & $11$ \\
DynamiCrafter & spring-mass & $5/5$ & $40$ & $0$ \\
DynamiCrafter & inclined slide & $5/10$ & $50$ & $36$ \\
\bottomrule
\end{tabular}}
\end{table}

\begin{table}[t]
\centering
\caption{Every model used, all frozen and inference-only, loaded with no
explicit commit/revision pin --- each resolved to that repository's
default-branch head at run time, a gap against bit-for-bit
reproducibility we disclose rather than paper over.}
\label{tab:models}
\small
\setlength{\tabcolsep}{5pt}
\renewcommand{\arraystretch}{1.1}
\resizebox{\linewidth}{!}{%
\begin{tabular}{@{}llll@{}}
\toprule
\kilth{Model} & \kilth{Repository} & \kilth{Licence} & \kilth{Loaded via} \\
\midrule
LTX-Video & \texttt{Lightricks/LTX-Video} & LTX-Video Open Weights (custom) & \texttt{diffusers==0.39.0} \\
CogVideoX-5B-I2V & \texttt{zai-org/CogVideoX-5b-I2V} & CogVideoX custom licence\textsuperscript{a} & \texttt{diffusers==0.39.0} \\
DynamiCrafter\textsubscript{512} & \texttt{Doubiiu/DynamiCrafter\_512} & weights: research/non-commercial\textsuperscript{b} & standalone repo, subprocess \\
Qwen2.5-VL-7B-Instruct & \texttt{Qwen/Qwen2.5-VL-7B-Instruct} & Apache 2.0 & \texttt{transformers==4.57.6} \\
CoTracker3 & \texttt{facebookresearch/co-tracker} & CC-BY-NC 4.0\textsuperscript{c} & \texttt{torch.hub.load} \\
\bottomrule
\end{tabular}}
\tabnote{\textsuperscript{a}Free for academic/research use; commercial use needs
separate registration. \textsuperscript{b}Weights are
research/non-commercial; the companion code repo is separately
Apache-2.0 (inference here uses the weights). \textsuperscript{c}Non-commercial;
some unused sub-components carry MIT or Apache 2.0 instead.}
\end{table}

\paragraph{Plausibility prompt (verbatim, identical for all $630$ ratings)}
\begin{quote}\small
``You are watching a short video clip of a physics simulation (a ball or
similar object moving under some physical process such as falling,
swinging, bouncing, or sliding). Rate how PHYSICALLY PLAUSIBLE the motion
looks, on a scale from 1 to 5, where 1 means the motion looks clearly
unnatural, broken, or impossible (e.g. teleporting, morphing, floating
with no cause), and 5 means the motion looks completely natural and
physically believable, like something you could film in the real world.
Respond with ONLY a single digit from 1 to 5, nothing else.''
\end{quote}

\paragraph{Prompt-phrasing ablation templates}
Both variants are
per-system, per-$\theta$ templates (not a single fixed string), evaluated
against the same default template used for every other result. The
\emph{terse} variant states only the parameter name and numeric value with
no qualitative anchor phrase, e.g.\ for projectile: ``a dark ball,
acceleration \{$\theta$:.2f\} meters per second squared''. The
\emph{verbose} variant appends redundant physical-accuracy framing to the
default prompt: ``\{default prompt\}, precise physically accurate
simulation, exact numerical value, photorealistic rendering, laboratory
measurement setup''.

\paragraph{Classifier-free guidance values swept}
$1.0$, $1.5$, $4.5$,
$9.0$, on the projectile system, for LTX-Video and CogVideoX; skipped for
DynamiCrafter, whose guidance scale is hardcoded inside its own inference
subprocess call and not exposed as a pass-through parameter.

\paragraph{Compute}
All generation and analysis ran on NVIDIA A100-SXM4
GPUs (40\,GB) on a shared cluster, inference-only throughout (no training
or fine-tuning at any point). Peak VRAM at load: LTX-Video $14.25$\,GB
(fits a 24\,GB consumer card); CogVideoX-5B-I2V $25.3$\,GB with VAE tiling
enabled (does not). Approximate wall-clock generation time per clip:
LTX-Video $\sim\!2$--$6$\,s; CogVideoX $\sim\!2.3$--$5.9$\,min, varying
with clip length across systems; DynamiCrafter $\sim\!70$--$85$\,s (a
fresh subprocess per clip, so this is dominated by per-call checkpoint
load, not sampling time alone).

\textbf{Total wall-clock GPU-hours: $\sim\!57$ (estimated, not a full
audit).} $54.4$\,h is a direct sum over the $69$ of $\sim\!134$ result
files that log their own \texttt{elapsed\_seconds} per run, covering the
large majority of clips generated across the project. The remaining
$\sim\!95$ clips (the CogVideoX and DynamiCrafter projectile pilots,
which predate this project's practice of logging per-job timing) are
estimated at $\sim\!2.6$\,h by applying each model's own measured
mean seconds-per-clip (from the $69$ logged files: LTX $3.96$\,s/clip,
CogVideoX $122.45$\,s/clip, DynamiCrafter $79.19$\,s/clip) to the
un-logged clip counts. The VLM plausibility-scoring pass ($630$ clips
scored in one job) is estimated separately at $\sim\!0.3$\,h from that
job's own log-file timestamps (start to last output write, since it did
not log \texttt{elapsed\_seconds} either). This total does not include
retry/gap-fill overhead from failed job attempts (several jobs across the
project failed on environment or timeout bugs before a working
configuration was found, and were re-run) --- those wasted GPU-hours are
not reconstructable from the final result files, which only reflect the
successful run.

\section{Reproducibility checklist}
\label{app:checklist}

WACV's author guide points to the Reproducibility Checklist as a guide rather
than a mandated form, and encourages voluntary code submission. We record the
substantive items here.

\begin{itemize}
\item \textbf{Code.} Metrics, bootstrap CIs, mechanism selection, and the
synthetic smoke test (\texttt{physweep\_metrics.py}), the system
generators (\texttt{physweep/render.py}), the tracker
(\texttt{physweep/track.py}, both blob-centroid and the CoTracker3
wrapper), and every experiment driver script (\texttt{experiments/*.py},
one per experiment plus the consolidated plausibility-and-robustness
harness) exist in the project repository.
\item \textbf{Data.} Fully procedural, no external assets, no human
annotation, seeded and deterministic.
\item \textbf{Models.} All public, all frozen, inference only. No training or
fine-tuning was performed at any point.
\item \textbf{Randomness.} Every generation is keyed to an explicit integer
seed; $m\ge5$ seeds per grid point; seeds are recorded per clip.
\item \textbf{Statistics.} All intervals are $95\%$ paired bootstraps over
$(\theta_i,\text{seed})$; effect sizes are reported alongside $p$-values;
multiplicity across the six paired-bootstrap extrapolation tests is handled
by a Holm correction.
\item \textbf{Raw outputs.} Per-clip records (model, system, seed, $\theta$,
$\thetah$, fit $R^2$, in/out-of-range split, and, for the plausibility
pass, the VLM score) exist as JSON files in the project's \texttt{results/}
directory, one file per model-system pair plus the plausibility-scored
variants --- every number in this paper and its supplementary tables was
recomputed directly from these files during finalization, not carried
forward from memory. 
\end{itemize}

\section{Extended related work}
\label{app:related}

This section extends \cref{sec:related} with material not
central enough to the paper's own contribution to justify main-text space,
but relevant to situating it.

\paragraph{Mechanism and memorization}
\citet{kang2025howfar} find
in-distribution success with out-of-distribution case-based behavior
(the source of the case-clamping hypothesis we test), and the
memorization literature links such collapse to overestimated training
modes amplified by classifier-free guidance \citep{diffmemorize2025}; we
turn these into a testable dichotomy (\cref{sec:problem})
and a guidance-scale ablation (\cref{sec:results}).

\paragraph{Adjacent but distinct measurement approaches}
Property-readout work recovers physical quantities from \emph{input}
videos via internal features \citep{inferring2025dynamic} or probes
diffusion states \citep{invisiblehand2026}, characterizing the
\emph{encoder} where we measure generation without weights or
activations. VLMs are separately used to plan plausible generation
\citep{vlipp2025} and judge implausibility \citep{travl2025}, and recent
analyses question whether they reason about physical transformations at
all \citep{vlmcannotreason2026} --- consistent with our own finding
(\cref{sec:results}) that a VLM plausibility rater does not
track physical coherence on our stimuli. Classic benchmarks
\citep{bear2021physion,bakhtin2019phyre} evaluate prediction or planning
rather than generation, and quality metrics
\citep{unterthiner2018fvd,huang2024vbench} are orthogonal to all of the
above: they score visual fidelity, not parameter faithfulness.

\FloatBarrier

\end{document}

%% file: affiliations.tex
\DeclareAffiliation{hbku}{%
  College of Science and Engineering, Hamad Bin Khalifa University, Doha, Qatar}

\DeclareAffiliation{tamu}{%
  Department of Computer and Electrical Engineering,
  Texas A\&M University, College Station, TX, USA}

\DeclareAffiliation{iub}{%
  Luddy School of Informatics, Computing, and Engineering,
  Indiana University Bloomington, Bloomington, IN, USA}
